\documentclass[letterpaper]{article} 
\usepackage{aaai25}  
\usepackage{times}  
\usepackage{helvet}  
\usepackage{courier}  
\usepackage[hyphens]{url}  
\usepackage{graphicx} 
\urlstyle{rm} 
\usepackage{natbib}  
\usepackage{caption} 
\frenchspacing  
\setlength{\pdfpagewidth}{8.5in} 
\setlength{\pdfpageheight}{11in} 
%

\usepackage{algorithm}
\usepackage{subcaption}

%
\usepackage{newfloat}
\usepackage{listings}
\DeclareCaptionStyle{ruled}{labelfont=normalfont,labelsep=colon,strut=off} 
\lstset{%
	basicstyle={\footnotesize\ttfamily},
	numbers=left,numberstyle=\footnotesize,xleftmargin=2em,
	aboveskip=0pt,belowskip=0pt,%
	showstringspaces=false,tabsize=2,breaklines=true}
\floatstyle{ruled}
\newfloat{listing}{tb}{lst}{}
\floatname{listing}{Listing}
%
\pdfinfo{
/TemplateVersion (2025.1)
}

\usepackage{xcolor,soul,framed} 

\colorlet{shadecolor}{yellow}
\usepackage{multirow}
\usepackage{array}
\usepackage{amsthm}

\newtheoremstyle{theorem}
  {\topsep}
  {\topsep}
  {}
  {}
  {\itshape}
  {:}
  {.5em}
  {\thmname{#1}\thmnumber{ #2}\thmnote{ (#3)}}
\theoremstyle{theorem}

\newtheoremstyle{proposition}
  {\topsep}
  {\topsep}
  {}
  {}
  {\itshape}
  {:}
  {.5em}
  {\thmname{#1}\thmnumber{ #2}\thmnote{ (#3)}}
\theoremstyle{proposition}

\newtheorem{theorem}{Theorem}

\newtheorem{corollary}{Corollary}

\usepackage{mdwmath}
\usepackage{mdwtab}
\usepackage{eqparbox}
\usepackage{url}

\usepackage{amssymb}
\usepackage{amsfonts}
\usepackage{amsmath}
\usepackage{multicol}
\usepackage{bm}

\usepackage{algorithm}
\usepackage{algorithmicx}
\usepackage{algpseudocode}
\usepackage{cancel}
\usepackage{subeqnarray}
\usepackage{cases}
\usepackage{mathrsfs}
\usepackage{latexsym}
\usepackage{booktabs}
\usepackage{enumitem}

\usepackage[noabbrev,nameinlink]{cleveref}

\crefname{figure}{Figure}{Figures}
\Crefname{figure}{Figure}{Figures}
\crefname{subfigure}{Figure}{Figures}
\Crefname{subfigure}{Figure}{Figures}

\makeatletter

\newcommand{\Rmnum}[1]{\expandafter\@slowromancap\romannumeral #1@}
\makeatother

%

\DeclareCaptionStyle{ruled}{labelfont=normalfont,labelsep=colon,strut=off} 
\lstset{%
	basicstyle={\footnotesize\ttfamily},
	numbers=left,numberstyle=\footnotesize,xleftmargin=2em,
	aboveskip=0pt,belowskip=0pt,%
	showstringspaces=false,tabsize=2,breaklines=true}
\floatstyle{ruled}
\newfloat{listing}{tb}{lst}{}
\floatname{listing}{Listing}
%
\pdfinfo{
/TemplateVersion (2025.1)
}

\setcounter{secnumdepth}{2} 

%


\title{Graph Agent Network: Empowering Nodes with Inference Capabilities for Adversarial Resilience}
\author{
    Ao Liu\textsuperscript{\rm 1},
    Wenshan Li\textsuperscript{\rm 2}\thanks{Corresponding author},
    Tao Li\textsuperscript{\rm 1},
    Beibei Li\textsuperscript{\rm 1},
    Guangquan Xu\textsuperscript{\rm 3,4}, \\
    Pan Zhou\textsuperscript{\rm 5},
    Wengang Ma\textsuperscript{\rm 1},
    Hanyuan Huang\textsuperscript{\rm 1}\\
}
\affiliations{
    \textsuperscript{\rm 1}School of Cyber Science and Engineering, Sichuan University \\
    \textsuperscript{\rm 2}School of Cyber Science and Engineering, Chengdu University of Information Technology \\
    \textsuperscript{\rm 3}College of Information Science and Technology, Shihezi University \\
    \textsuperscript{\rm 4}Tianjin Key Laboratory of Advanced Networking (TANK), College of Intelligence and Computing, Tianjin University \\
    \textsuperscript{\rm 5}School of Cyber Science and Engineering, Huazhong University of Science and Technology \\
    
    \{aliu, litao, libeibei, mawengang941206\}@scu.edu.cn,
    helenali@cuit.edu.cn,
    losin@tju.edu.cn, 
    panzhou@hust.edu.cn,
    huanghanyuan@stu.scu.edu.cn
%
}

\begin{document}

\maketitle

\begin{abstract}
End-to-end training with global optimization have popularized graph neural networks (GNNs) for node classification, yet inadvertently introduced vulnerabilities to adversarial edge-perturbing attacks. Adversaries can exploit the inherent opened interfaces of GNNs' input and output, perturbing critical edges and thus manipulating the classification results. Current defenses, due to their persistent utilization of global-optimization-based end-to-end training schemes, inherently encapsulate the vulnerabilities of GNNs. This is specifically evidenced in their inability to defend against targeted secondary attacks. In this paper, we propose the Graph Agent Network (GAgN) to address the aforementioned vulnerabilities of GNNs. GAgN is a graph-structured agent network in which each node is designed as an 1-hop-view agent. Through the decentralized interactions between agents, they can learn to infer global perceptions to perform tasks including inferring embeddings, degrees and neighbor relationships for given nodes. This empowers nodes to filtering adversarial edges while carrying out classification tasks. Furthermore, agents' limited view prevents malicious messages from propagating globally in GAgN, thereby resisting global-optimization-based secondary attacks. We prove that single-hidden-layer multilayer perceptrons (MLPs) are theoretically sufficient to achieve these functionalities. Experimental results show that GAgN effectively implements all its intended capabilities and, compared to state-of-the-art defenses, achieves optimal classification accuracy on the perturbed datasets.
\end{abstract}

\section{Introduction}
Graph neural networks (GNNs) have become state-of-the-art models for node classification tasks by leveraging end-to-end global training paradigms to effectively learn and extract valuable information from graph-structured data~\cite{hamilton_2017_SAGE}. However, this approach has also inadvertently introduced inherent vulnerabilities, making GNNs vulnerable to adversarial edge-perturbing attacks. These vulnerabilities arise from GNNs exposing a global end-to-end training interface, which while allowing for precise classification, also provides adversaries with opportunities to attack GNNs. These attacks poses a significant challenge to the practical deployment of GNNs in real-world applications where security and robustness are of paramount importance, leading to a critical issue in various application areas~\cite{yang2022difference}, including those where adversarial perturbations undermine public trust~\cite{Kreps2020SinceAdv}, interfere with human decision making~\cite{Walt2019NMI}, and affect human health and livelihoods~\cite{Samuel2019Sci}.

 \begin{figure}[tb]
    \centering
    \includegraphics[width=0.48\textwidth]{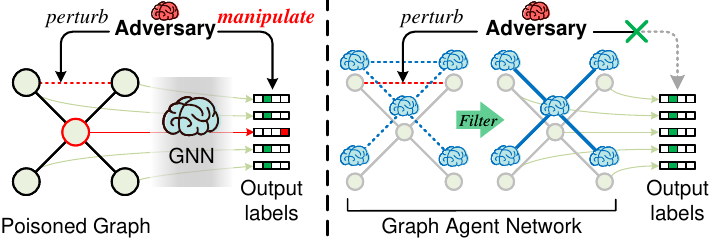}
    \caption{The principle of adversary exploiting the inherent vulnerabilities caused by global optimization to attack GNNs, and the defense mechanism of GAgN against these attacks.}
    \label{fig_intro}
\end{figure}

In response to the edge-perturbing attacks, existing defense mechanisms primarily rely on global-optimization-based defense methods~\cite{sun2022adversarial}. Their objective is to enhance the robustness of GNNs through adversarial training~\cite{feng2019GAD}, aiming to increase the tolerant ability of perturbations to defend against potential adversarial perturbations. These approaches may still inherit some inherent vulnerabilities of GNNs, as their model frameworks still follow the global-optimization pattern of GNNs. Representative examples include: (1) RGCN~\cite{Zhu2019RGCN} replaces the hidden representations of nodes in each graph convolutional network (GCN)~\cite{Kpif_2017_ICLR} layer to the Gaussian distributions, to further absorb the effects of adversarial changes. (2) GCN-SVD~\cite{entezari2020SVD} combines a singular value decomposition (SVD) filter prior to GCN to eliminate adversarial edges in the training set.
(3) STABLE~\cite{li2022STABLE} reforms the forward propagation of GCN by adding functions that randomly recover the roughly removed edges.
Unfortunately, the computational universality of GNNs has been recently demonstrated~\cite{Loukas2019GNN_DvW,xu2018Power}, signifying that attributed graphs can be classified into any given label space, even those subject to malicious manipulation.
This implies that defense approaches focused on enhancing GNNs' global robustness~\cite{zhang2020gnnguard, liu2021EGNN,li2022STABLE, liu2024towards} remain theoretically vulnerable, with potential weaknesses to secondary attacks. In fact, the vulnerability of existing global-optimization-based defenses has been theoretically proven~\cite{Liu_2022_TNNLS}. Adversaries can reduce the classification accuracy of these defenses once again by launching secondary attacks using these defenses as surrogate models. Experimental evidence demonstrates that even under the protection of state-of-the-art defensive measures~\citep{NEURIPS2020_99e314b1}, secondary attacks targeting these defenses successfully mislead 72.8\% of the nodes into making incorrect classifications once again.

To overcome the issues mentioned above, inspired by the natural filtering ability of decentralized intelligence~\cite{saldanha2022swarm}, we propose a decentralized agent network called {G}raph {Ag}ent {N}etworks (GAgN) whose principle is shown in Figure~\ref{fig_intro}. GAgN empowers nodes with autonomous awareness, while limiting their perspectives to their 1-hop neighbors. As a result, nodes no longer rely solely on global-level end-to-end training data. Instead, they progressively gain the perception of the entire network through communication with their neighbors and accomplish self-classification.

 \begin{figure*}[t]
    \centering
    \includegraphics[width=0.93\textwidth]{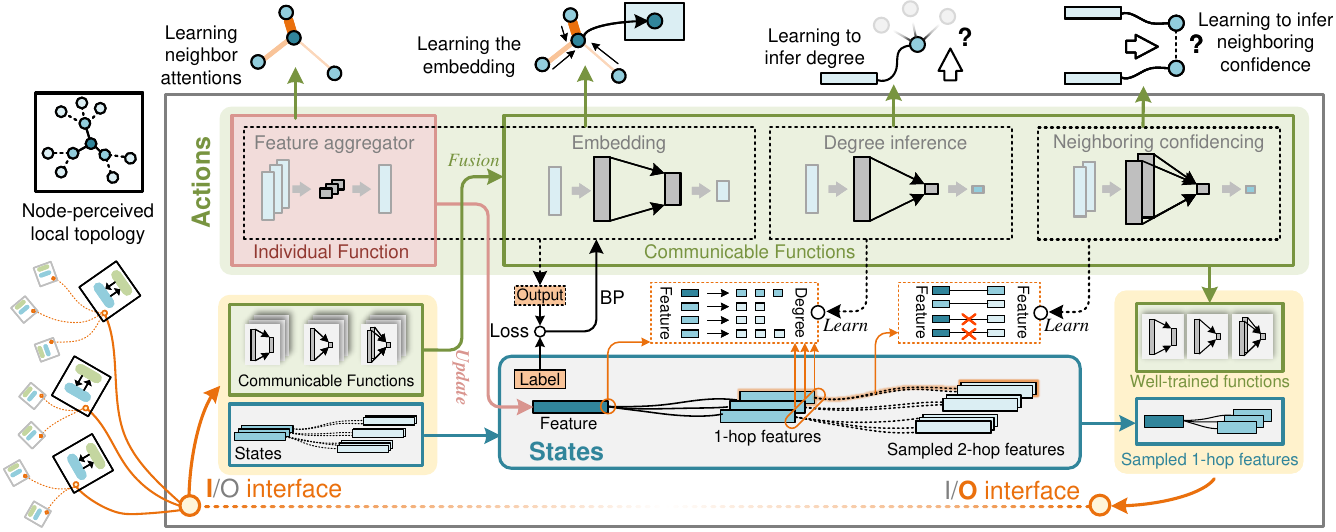}
    \caption{The internal structure and communication paradigm of an agent within GAgN.}
    \label{fig_overallFrame}
\end{figure*}

Specifically, GAgN is designed to enhance the agent-based model (ABM)~\cite{KHODABANDELU_2021_ABM}, ensuring improved compatibility with graph-based scenarios. In GAgN, agents are interconnected within the graph topology and engage in cautious communication to mitigate the impact of potential adversarial edges. As time progresses, these agents systematically exchange information to broaden their receptive fields, perceive the global information, and attain decentralized intelligence.
An agent primarily includes two major abilities: 1) Storage: states, which are storable features, and actions, which are trainable functions for inference. 2) Communication: agents can receive states and actions from neighboring agents and integrate them to improve their own inference capabilities.

During the communication round, an agent organizes multiple restricted-view mini-datasets based on information received from its immediate neighbors, and updates its states and actions accordingly. After sufficient communications, the agent accumulates enough historical experience to perform specific tasks. These tasks include: 1) computing its own embedding, 2) estimating the possible degree of a given node, and 3) determining the neighboring confidence of two given nodes. The first function enables GAgN to perform node classification, whereas the second and third functions collaboratively contribute to filtering adversarial edges.

Furthermore, we rigorously demonstrate that all these functions can be accomplished by the single-hidden-layer multilayer perceptron (MLPs). In other words, theoretically, within an agent, only three trainable matrices are required in the trainable part, excluding multi-layer networks, bias matrices, and other trainable parameters, to successfully perform the corresponding tasks. This conclusion offers theoretical backing for substantially reducing GAgN's computational complexity. In practice, by instantiating nodes as lightweight agents, a GAgN can be constructed to carry out node classification tasks with adversarial resilience.

Our main contributions including:
\begin{itemize}[left = 0.1cm]
  \item We propose a decentralized agent network, GAgN, which empowers nodes with the ability to autonomously perceive and utilize global intelligence to address the inherent vulnerabilities of GNNs and existing defense models.
  \item We theoretically prove that an agent can accomplish the relevant tasks using only three trainable matrices.
  \item We experimentally show that primary functions of GAgN have been effectively executed, which collectively yielding state-of-the-art accuracy in perturbed datasets.
\end{itemize}

\section{The Proposed Method}\label{sec_method}


We start with action spaces and discuss the state's structure and update strategies alongside other components.
Figure~\ref{fig_overallFrame} shows the workflow of GAgN.

\subsubsection{Preliminaries}
Consider connected graphs $\mathcal{G}=(\mathcal{V},\mathcal{E})$ with $|\mathcal{V}|$ nodes.
The degree of node $v_i$ is $\mathrm{deg}_i$. The feature vector and one-hot label of $v_i$ are $\mathbf{z}_i \in \mathbb{R}^{1 \times d_z}$ and $\mathbf{\ell}_i \in \mathbb{R}^{1 \times d_L}$.
The maximum degree is $\mathrm{deg}_{\mathrm{max}}$.
The neighborhood $\mathrm{N}_i$ of $v_i$ includes all $v_j$ with $e_{i,j} \in \mathcal{E}$.
The edge between $v_i$ and $v_j$ is $e_{i,j}$.
Given a set of vectors, $\mathrm{Concat}_r(\cdot)$ and $\mathrm{Concat}_c(\cdot)$ output the row-wise and column-wise concatenated matrices.

\subsection{Action Spaces}\label{Sec_action_space}

The action space consists of all potential actions a node can take in a reasoning task~\cite{peng2017learning}. In GAgN, it is designed as a set of perception functions with the same parameter structure but different specific parameters. The optimal parameters for agents in each communication round can be found within this action space. We use node $v_i$ as an example to illustrate these functions.

\subsubsection{Individual Function}

Parameters of individual functions are specific to each agent, reflecting the understanding gained through interactions with neighbors, without direct agent interactions. This includes:

$\blacktriangleright$ Neighbor feature aggregator $\mathcal{A}_i: \mathbb{R}^{(\text{deg}_i + 1) \times d_z} \to \mathbb{R}^{d_z}$ aggregates features from $v_i$’s neighborhood into $v_i$. Similar to GAT, the node’s own features are also aggregated. The trainable parameter of $\mathcal{A}_i$ is an attention vector $\mathbf{w}_i \in \mathbb{R}^{\text{deg}_i + 1}$, which captures the attention between $v_i$ and its neighbors. $\mathcal{A}_i$ aggregates features of $\mathrm{N}_i$ as
\begin{equation}\small \label{eq_aggregate}
 \mathcal{A}_i(\mathrm{N}_i) = {\mathbf{w}_i \mathrm{Concat}_r(  \{\mathbf{z}_i\} \cup \{\mathbf{z}_j:v_j\in \mathrm{N}_i\} )} / {\mathrm{deg}_i + 1}.
\end{equation}
$\mathbf{w}_i$ is initialized with minimal values and updated iteratively.

\subsubsection{Communicable Functions}
Parameters of communicable functions are exchangeable, allowing agents to broaden their collective perception through sharing. By communicating, agents can learn from others' perceptions and integrate this knowledge into their own understanding. These functions encompass:

$\blacktriangleright$ Embedding function $\mathcal{M}_i: \mathbb{R}^{1\times d_z}\to\mathbb{R}^{1\times d_L}$ that embeds node features into the label space.
We have proved that an effective embedding can be generated using a $d_z \times d_L$-dimensional trainable matrix (\emph{c.f.} Corollary~\ref{cor_1}).

$\blacktriangleright$ Degree inference function $\mathcal{D}_i: \mathbb{R}^{1\times d_z}\to\mathbb{R}^{1\times \mathrm{deg}_{\mathrm{max}}}$ that predicts the probability distribution of a node's degree across the range of 1 to $\mathrm{deg}_{\mathrm{max}}$ (i.e., one-hot encoding) using a given feature. Instantiating $\mathcal{D}_i$ as a a $d_z \times \mathrm{deg}_{\mathrm{max}}$-dimensional trainable matrix can provide sufficient capacity for fitting (\emph{c.f.} Theorem~\ref{thm_2}).

$\blacktriangleright$ Neighboring confidence function $\mathcal{N}_i: \mathbb{R}^{1\times 2d_z}\to\mathbb{R}$, a binary-classifier that infers whether two given node features are ground-truth neighbors, and the conclusion is provided in the form of a confidence value. We have proved that any two nodes on $\mathcal{G}$ can be inferred neighborhood relationship by a $2d_z \times 1$-dimensional trainable matrix (\emph{c.f.} Theorem~\ref{thm_3}).


\subsection{I/O Interface}
\subsubsection{Input}
In a single communication round, node $v_i$ receives the following messages from its neighbors $\mathrm{N}_i$.
 1) Communicable functions of all nodes in $\mathrm{N}_i$, i.e., sets of functions $\mathcal{M}_{i}^{\mathrm{rec}} = \{\mathcal{M}_j:v_j \in \mathrm{N}_i\}$, $\mathcal{D}_{i}^{\mathrm{rec}} = \{\mathcal{D}_j:v_j \in \mathrm{N}_i\}$, and $\mathcal{N}_{i}^{\mathrm{rec}} = \{\mathcal{N}_j:v_j \in \mathrm{N}_i\}$.
2) Features from the 1-hop neighbors $Z^{\mathrm{I}}_i=\{\mathbf{z}_j: v_j\in \mathrm{N}_i\}$.
3) Sampled features from the 2-hop neighbors $Z^{\mathrm{II}}_i=\bigcup_{v_j \in \mathrm{N}_i} \mathrm{S}(v_j) - Z^{\mathrm{I}}_i$, where $\mathrm{S}(\cdot)$ denotes the sample function.

\subsubsection{Output}
Node $v_i$ sends the following messages to its neighbors $\mathrm{N}_i$.
 1) Feature in the current communication round $\mathbf{z}_i$.
2) Sampled features from node $v_i$'s neighbor. In this approach, $v_i$ samples (i.e., operates $\mathrm{S}(\cdot)$) from its received immediate neighbor features $Z^{\mathrm{I}}$ and sends them out. The sampling function of $v_i$ aims to sample the neighbors' features that have a relatively large difference from its own feature. That is, for a specific receiving nodes $v_k$, this approach reduces the similarity between the $Z^{\mathrm{I}}_k$ and $Z^{\mathrm{II}}_k$, while reducing the amount of messages transmitted outward. Therefore, given the sampling size $\rho$, $\mathrm{S}(\cdot)$ is defined as
\begin{equation}\small\label{eq_sample}
\mathrm{S}(v_i) = \left\{ \begin{array}{l}
                                    \mathrm{N}_i,  \rho \leq |\mathrm{N}_i| \\
                                    \{ \mathbf{z}_j : v_j \in {N}_i, \mathrm{rank}(||\mathbf{z}_j, \mathbf{z}_i ||_2) \leq \rho \} , \rho > |\mathrm{N}_i|
                                  \end{array}\right.,
\end{equation}
where $\mathrm{rank}(\cdot)$ denotes the rank, a smaller rank indicating a larger value.
3) Communicable functions $\mathcal{M}_{i}$, $\mathcal{D}_{i}$, and $\mathcal{N}_{i}$.

\subsection{Decision Rules}


The decision rules define how to choose actions in different states~\cite{farmer2009economy}. In GAgN, they guide the training of function parameters in the action space, based on $v_i$'s states after receiving neighbors' messages. These functions adjust parameters via the loss function using stochastic gradient descent (SGD).

 \begin{figure}[htb]
    \centering
    \includegraphics[width=0.43\textwidth]{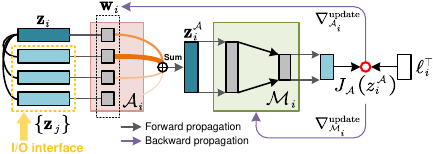}
    \caption{Independent training flow of $\mathcal{A}_i$ and $\mathcal{M}_i$.}
    \label{fig_decision1}
\end{figure}

\subsubsection{Aggregator}
The loss function $J_{\mathcal{A}}(\cdot)$ of $\mathcal{A}_i$ assigns different weights to $v_i$'s neighbors to achieve a more accurate embedding closer to its label. For the aggregated feature $\mathbf{z}^{\mathcal{A}}_i = \mathcal{A}_i(\mathrm{N}_i)$, the cross-entropy loss is:
$  J_{\mathcal{A}}(z^{\mathcal{A}}_i) =- \log \mathcal{M}_{i}(\mathbf{z}^{\mathcal{A}}_i)\mathbf{\ell}_i^{\top}.$
Parameters of $\mathcal{A}_i$ are updated via SGD-based backpropagation. Notably, while $\mathcal{M}_{i}$ participates in forward propagation, updates for $\mathcal{A}_i$ and $\mathcal{M}_{i}$ occur asynchronously to prevent the leakage of agents' private understandings (i.e., $\mathcal{A}_i$'s parameters) to their neighbors before secure communication through end-to-end training. Consequently, the gradient for updating $\mathcal{A}_i$ is
$\nabla_{\mathcal{A}_i}^{\mathrm{update}} = \nabla_{\mathcal{A}_i}\left(J_{\mathcal{A}}(\mathbf{z}^{\mathcal{A}}_i)\right).$
To obtain $\nabla_{\mathcal{A}_i}^{\mathrm{update}}$, we perform iterative training while freezing the parameters of $\mathcal{M}_{i}$.

\subsubsection{Embedding Func.}

 During training, parameters of $\mathcal{A}_{i}$ are frozen before updating the parameters of $\mathcal{M}_{i}$, i.e.,
$
  \nabla_{\mathcal{M}_i}^{\mathrm{update}} = \nabla_{\mathcal{M}_i}\left(J_{\mathcal{A}}(\mathbf{z}^{\mathcal{A}}_i)\right).
$
$\mathcal{A}_i$ and $\mathcal{M}_i$ share the same forward propagation process. Upon explicit request for specific tasks, i.e., when the training of the corresponding model parameters is initiated, $\mathcal{A}_i$ and $\mathcal{M}_i$ independently perform back propagation based on $J_{\mathcal{A}}(z^{\mathcal{A}}_i)$, thus maintaining the non-communicability of $\mathcal{A}_i$. This is illustrated in Figure~\ref{fig_decision1}.

\subsubsection{Inference Func.}

Agents use the I/O interface to access neighboring features and degrees. $\mathcal{D}_i$ learns to associate these features with degrees. Neighbors' true degrees, provided as one-hot vectors, guide $\mathcal{D}_i$'s updates, mapping features to discrete degree distributions. The attention vector $W_i$ adjusts the loss weight for each neighbor, reducing the impact of those with lower attention weights.

We denote the matrix representation of $\mathcal{D}_i$'s inference as
\begin{equation}\small \label{eq_XD}
\mathcal{X}_i^{\mathcal{D}} = \sigma(\mathrm{Concat}_r(\mathcal{D}_i(\{\mathbf{z}_j:v_j\in \mathrm{N}_i\} \cup \mathbf{z}_i))),
\end{equation}
where $\sigma$ is the nonlinear activation function, and the concatenated one-hot vector that encodes $v_i$'s and its neighbors' degrees as the supervisor:
 \begin{equation}\small \label{eq_YD}
\mathcal{Y}^{\mathcal{D}}_i = \mathrm{OneHot}(\{ d_j:v_j\in\mathrm{N}_i \} \cup \mathrm{deg}_i).
\end{equation}
Then, the loss function of $\mathcal{D}_i$ is defined as a Kullback-Leibler divergence (denoted as $\mathrm{KL}(\cdot)$) weighted by neighbor attention, i.e.,
$
  J_{\mathcal{D}}(\mathcal{X}_i^{\mathcal{D}}) = - \frac{\mathrm{KL}(\mathcal{X}_i^{\mathcal{D}} \| \mathcal{Y}^{\mathcal{D}}_i) \cdot \sum_{j} \mathbf{I}_{j} \mathbf{w}_{i,j}}{\mathrm{deg}_i+1},
$
where $\mathbf{I}$ is an all-ones vector with $\mathrm{deg}_{\mathrm{max}}$ elements, used to compute the sum of the elements in each row of a matrix.

\subsubsection{Neighboring Confidence Func.}

 $v_i$ uses $\mathcal{N}_i$ to identify neighboring relationships based on node features. The training data for $\mathcal{N}_i$ includes features within $v_i$'s view: $\mathbf{z}_i$ and $\{ \mathbf{z}_j : v_j \in \mathrm{N}_i \}$. In a graph, distinguishable decision regions exist for similarity patterns between $v_i$ and its first- and second-order neighbors~\cite{perozzi2014deepwalk}. Thus, $\mathcal{N}_i$ uses samples from the current communication round to train on two categories: 1) feature pairs labeled as "neighbor"
\begin{equation}\small \label{eq_training_samples_neighbor}
  \mathcal{X}^{\mathcal{N}}_1 = \{\mathrm{Concat}_c(\mathbf{z}_i,\mathbf{z}_j), \mathrm{Concat}_c(\mathbf{z}_j,\mathbf{z}_i):v_j\in Z^{\mathrm{I}}_i\},
\end{equation}
and 2) the feature pairs that are labeled as ``non-neighbor'' with training samples
\begin{equation}\small \label{eq_training_samples_non-neighbor}
  \mathcal{X}^{\mathcal{N}}_0 = \{\mathrm{Concat}_c(\mathbf{z}_i,\mathbf{z}_k), \mathrm{Concat}_c(\mathbf{z}_k,\mathbf{z}_i):v_k\in Z^{\mathrm{II}}_i\}.
\end{equation}
Then, according to the training samples $\mathcal{X}^{\mathcal{N}} = \mathcal{X}^{\mathcal{N}}_1 \cup \mathcal{X}^{\mathcal{N}}_0$, the loss function of $\mathcal{D}_i$ is defined as the binary cross-entropy loss (denoted as $\mathrm{CE}(\cdot)$), i.e.,
$
J_{\mathcal{N}}(\mathcal{X}^{\mathcal{N}}) = \mathrm{CE}(\mathcal{X}^{\mathcal{N}}_1,\mathcal{X}^{\mathcal{N}}_0)
$


All loss functions are differentiable, allowing functions to self-train based on their losses. Using SGD for parameter adjustment, these functions reduce the loss and improve inference capabilities.

\subsection{Action Update Rules}

%

Action update rules are essential for updating agents' actions (i.e., functions) based on functions received from their neighbors.
Using these rules, agents can dynamically adjust their actions to adapt to the evolving environment and enhance collective performance.

Node $v_i$ acquires the integrated feature of its neighbors' actions through weighted fusion of the transmitted actions in the current round. Based on these synthesized features, $v_i$ updates its actions accordingly. The weighted fusion methods are detailed in the following subsections.
\subsubsection{Embedding Func.}
These functions affects classification results, which adversaries may target, and any node in $\mathcal{V}$ could be compromised (some neighbors of $v_i$ might be malicious). Thus, $v_i$ cannot fully trust its neighbors' $\mathcal{M}_{\cdot}$. Since attention weights are individual functions and resist malicious message accumulation, the weighted fusion of $\mathcal{M}_{i}^{\mathrm{rec}}$ using $\mathbf{w}_i$ can filter harmful gradients, enabling a safe update of $\mathcal{M}_i$. Specifically, the parameter of $\mathcal{M}_i$ is updated as
\begin{equation}\small \label{eq_update_M}
\bm{\theta}_{\mathcal{M}_i} \gets \bm{\theta}_{\mathcal{M}_i} + \eta_{\mathcal{M}} \omega_{i,j}\sum_{\mathcal{M}_j\in \mathcal{M}_{i}^{\mathrm{rec}} } \left(\bm{\theta}_{\mathcal{M}_j} - \bm{\theta}_{\mathcal{M}_i}\right)  / |\mathrm{N}_i|,
\end{equation}
where $\omega_{i,j}$ is the $j$-th attention in $\mathbf{w_i}$ and $\eta_{\mathcal{M}}$ is the learning rate for updates that is usually the same as that for $\mathcal{M}_i$ itself.

\subsubsection{Degree Inference Func.}

For identical features, different degree inference functions may yield varying results. To harmonize these, $v_i$ trains a middleware function $\mathscr{D}_i$ to ensure that inference results for $\mathbf{z}_k, \forall v_k \in \mathcal{V}$ by $\mathscr{D}_i$ closely match those from the received set $\mathcal{D}_{i}^{\mathrm{rec}}$. This involves initializing $\mathscr{D}_i$ randomly and designing a loss function based on spatial random sampling for training, aiming to integrate stored functions and update $\mathcal{D}_i$. The training involves:
   1) Implementing minor random perturbations in Euclidean space to sample features $\mathbf{z}_s$ akin to $\mathbf{z}_i$, calculated as $\mathbf{z}_s = \mathbf{z}_i + \xi \mathbf{x}$, where $\mathbf{x}$ is an i.i.d. 0-1 Gaussian vector and $\xi$ the sample range.
   2) Using the features of nodes in $Z^{\mathrm{II}}_i$ as negative samples to broaden the sample diversity.
   3) Applying $\mathscr{D}_i$ and $\mathcal{D}_{i}^{\mathrm{rec}}$ to separately infer these samples and utilizing the mean squared error (MSE) to evaluate the differences in inference outcomes.
Training strategies for $\mathscr{D}_i$ are depicted in Figure~\ref{fig_Update2}.

In summary, the loss function of $\mathscr{D}_i$ is
{\small
\begin{multline}\label{eq_update_D}
\hspace{-1em}
  J_{\mathscr{D}}(v_i) = \sum_{\mathcal{D}_{j} \in \mathcal{D}_{i}^{\mathrm{rec}}} \omega_{i,j}  \Big[ - \sum_Q{\mathbb{E}}_{\mathbf{z}_s\sim P\left( \mathbf{z}_i \right)} {\mathrm{MSE}\left( \mathscr{D}_i( \mathbf{z}_s ) ,\mathcal{D}_j( \mathbf{z}_s ) \right)} \\ -  \sum\nolimits_{v_k \in Z^{\mathrm{II}}_i}{\mathrm{MSE}\left( \mathscr{D}_i( \mathbf{z}_k ), \mathcal{D}_j( \mathbf{z}_k ) \right)}\Big],
\end{multline}}
where $P$ is a sampling distribution in the Euclidean space, $Q$ defines the number of sample. After training that employing $J_{\mathscr{D}}(v_i)$ as the loss, the parameter of $\mathcal{D}_i$ is updated as
$
\bm{\theta}_{\mathcal{D}_i} \gets  \bm{\theta}_{\mathcal{D}_i} + \eta_{\mathcal{D}} \left( \bm{\theta}_{\mathcal{D}_i} - \bm{\theta}_{\mathscr{D}_i} \right).
$

 \begin{figure}[htb]
    \centering
    \includegraphics[width=0.48\textwidth]{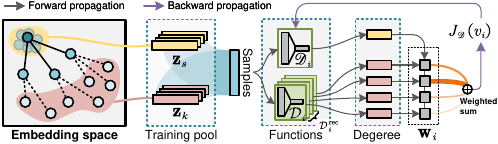}
    \caption{The training strategy for middleware function $\mathscr{D}_i$.}
    \label{fig_Update2}
\end{figure}

\subsubsection{Neighboring Confidence Func.}

From Eqs.~\eqref{eq_training_samples_neighbor} and~\eqref{eq_training_samples_non-neighbor}, under a limited view, the training set always includes $z_i$ when training $\mathcal{N}_i$ for any node $v_i$. This means $\mathcal{N}_i$ can only detect neighboring relationships for itself and a node with a given feature. The action update mechanism allows for generalization. By receiving neighboring functions $\mathcal{N}_{i}^{\mathrm{rec}}$ via the I/O interface, $\mathcal{N}_i$ can expand its scope to a richer dataset, learning from neighbors' inferential capabilities.
1) Construct a training set as per Eqs.~\eqref{eq_training_samples_neighbor} and~\eqref{eq_training_samples_non-neighbor}.
2) Use both $\mathcal{N}_i$ and $\mathcal{N}_{i}^{\mathrm{rec}}$ to predict outputs of the same samples.
3) For a neighbor $v_k$, $\mathcal{N}_k$'s output measures the similarity in inferential ability with $\mathcal{N}_i$. Similarity implies alignment; otherwise, a higher loss prompts $\mathcal{N}_i$ to adjust towards $\mathcal{N}_k$.
4) Iterative training with all received $\mathcal{N}_{i}^{\mathrm{rec}}$ completes function fusion.

The evaluation of $\mathcal{N}_k$ for $\mathcal{N}_i$ is quantified by the absolute differences between their outputs. After all functions in $\mathcal{D}_{i}^{\mathrm{rec}}$ provide evaluations, the loss is calculated by a weighted sum. This loss is then used to compute the gradient updates for a single update round of $\mathcal{N}_i$:
\begin{equation}\label{eq_update_N}\small
  \bm{\theta}_{\mathcal{N}_i} \gets \bm{\theta}_{\mathcal{N}_i} - \eta_{\mathcal{N}} \frac{\displaystyle \partial \left( \sum_{x_k \in \mathcal{X}^{\mathcal{N}}} \sum_{\mathcal{N}_{j} \in \mathcal{N}_{i}^{\mathrm{rec}}} \omega_{i,j} \left| \mathcal{N}_i(x_k) - \mathcal{N}_j(x_k) \right| \right)} { \partial \bm{\theta}_{\mathcal{N}_i} \mathrm{deg}_i |\mathcal{X}^{\mathcal{N}}_0 \cup \mathcal{X}^{\mathcal{N}}_1| }.
\end{equation}
Through multiple rounds of training, the collective guidance of $\mathcal{N}_j, \forall v_j \in \mathrm{N}_i$ on $\mathcal{N}_i$'s gradient descent direction is achieved through continuous evaluations, progressively fusing their inferential abilities into $\mathcal{N}_i$.

\subsection{Filtering Adversarial Edges}
As communication between agents tends towards convergence, they initiate the detection of adversarial edges. Given that all agents possess the capacity for inference, an intuitive approach would be to allow inter-agent detection until a sufficient and effective number of adversarial edges are identified. However, this method necessitates the execution of $|\mathcal{V}|\times|\mathcal{V}|$ inference calculations, the majority of which are redundant and do not proportionately contribute to the global detection rate relative to the computational power consumed. Therefore, we propose a multilevel filtering method that significantly reduces computational load while preserving a high detection rate.

Specifically, the entire detection process is grounded in the functions trained from the agents' first-person perspectives, with each agent acting both as a detector and a detectee. The detection procedure is illustrated by taking node $v_i$ as the detector and node $v_x$ as the detectee:
\begin{itemize}[left=0.05cm]
  \item[1.] For any node $v_x$, if there is a neighbor with insufficient attention, the edge formed by this neighbor relationship is considered as a suspicious edge.
  \item[2.] Random proxy communication channels are established, introducing an unfamiliar neighbor $v_i$ to $v_x$, with both parties exchanging information through I/O interfaces.
  \item[3.] $v_i$ infers the degree of $v_x$ based on its own experience (i.e., applies $\mathcal{D}_i(\mathbf{z}_x)$); if the inferred result deviates from the actual outcome, the process advances to the subsequent detection step; otherwise, a new detectee is selected.
  \item[4.] $v_i$ estimates neighbor confidence between $v_x$ and its neighbors using its experience (i.e., computing $\mathrm{Concat}_c(\mathbf{z}_x,\mathbf{z}_u)$, $\forall v_u \in \mathrm{N}_x$); edges with trust levels below a threshold are identified as adversarial edges.
\end{itemize}

Employing this approach, administrators possessing global model access rights are limited to establishing proxy connections, which entails creating new I/O interfaces for nodes without the ability to interfere in specific message passing interactions between them. As a result, distributed agents are responsible for conducting detection tasks, enabling individual intelligences to identify adversarial perturbations derived from global training.

\subsection{Theoretical Analysis}
We first establish the global equivalence.
\begin{theorem}[Equivalence]\label{thm_1}
  Denote $\mathbb{M}(\mathcal{G})=\{\mathbf{z}^{\mathbb{M}}_i = \mathcal{M}_i(\mathbf{z}_i):v_i \in \mathcal{V}\}$ as the embedding result of all nodes, $\texttt{SAGE}(\mathcal{G})=\{ \mathbf{z}^{\texttt{S}}_i,...\mathbf{z}^{\texttt{S}}_{|\mathcal{V}|} \}$ as the classification result of GraphSAGE, and $G$ as the set of all attributed graphs. For any $\texttt{SAGE}$ there exists $\mathbb{M}$ that
  \begin{equation}\label{eq_Equivalence}
    \mathbb{M}(\mathcal{G}) = \texttt{SAGE}(\mathcal{G}), \forall \mathcal{G} \in G.
  \end{equation}
\end{theorem}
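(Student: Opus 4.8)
The plan is to prove the equivalence by an explicit simulation: given an arbitrary instantiation of $\texttt{SAGE}$, I will exhibit a choice of GAgN parameters—the attention vectors $\mathbf{w}_i$ together with the matrices carried by the embedding functions $\mathcal{M}_i$—whose global output $\mathbb{M}(\mathcal{G})$ reproduces $\texttt{SAGE}(\mathcal{G})$ node for node on every $\mathcal{G}\in G$. The first step is to write $\texttt{SAGE}$ in its mean-aggregator form, so that layer $k$ computes $\mathbf{h}^{k}_i=\sigma\big(W^{k}\cdot\mathrm{MEAN}(\{\mathbf{h}^{k-1}_i\}\cup\{\mathbf{h}^{k-1}_j:v_j\in\mathrm{N}_i\})\big)$ with $\mathbf{h}^{0}_i=\mathbf{z}_i$; each layer is thereby exposed as the composition of a self-inclusive mean aggregation and a single shared linear map followed by $\sigma$.

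I would then match this decomposition against the two building blocks of a GAgN agent. The aggregator $\mathcal{A}_i$ of Eq.~\eqref{eq_aggregate} already forms the attention-weighted average of $\{\mathbf{z}_i\}\cup\{\mathbf{z}_j:v_j\in\mathrm{N}_i\}$ normalized by $\mathrm{deg}_i+1$; setting $\mathbf{w}_i$ to the all-ones vector collapses it to exactly the self-inclusive mean of $\texttt{SAGE}$, and the normalizer $\mathrm{deg}_i+1$ matches the cardinality of the self-inclusive neighborhood. The embedding function $\mathcal{M}_i$, which by Corollary~\ref{cor_1} is realizable as a single $d_z\times d_L$ matrix, plays the role of $W^{k}$; assigning every agent the common matrix $W^{k}$ in round $k$ is an admissible point of GAgN's parameter space. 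Under this identification one communication round—aggregation by $\mathcal{A}_i$ followed by transformation by $\mathcal{M}_i$ and $\sigma$—maps the stored feature state $\mathbf{h}^{k-1}$ to $\sigma(W^{k}\cdot\mathrm{MEAN}(\cdots))=\mathbf{h}^{k}$, i.e. one GAgN round equals one $\texttt{SAGE}$ layer.

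With this per-round correspondence, I would close the argument by induction on the round/layer index. The base case holds because both models initialize with the raw features $\mathbf{z}_i$. For the inductive step, assuming the GAgN feature state after round $k-1$ coincides with $\mathbf{h}^{k-1}_i$ at every node, applying the matched aggregation and embedding yields the round-$k$ state $\mathbf{h}^{k}_i$; iterating to the final round reproduces $\mathbf{z}^{\mathbb{M}}_i=\mathbf{z}^{\texttt{S}}_i$ for all $v_i\in\mathcal{V}$. Since no property peculiar to a particular graph is used, the identity $\mathbb{M}(\mathcal{G})=\texttt{SAGE}(\mathcal{G})$ holds for all $\mathcal{G}\in G$.

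The main obstacle I anticipate lies in reconciling the algebraic forms of the aggregate-and-transform step, not in the induction. GAgN applies a \emph{single} shared matrix $\mathcal{M}_i$ to a \emph{scalar}-weighted blend of the self and neighbor features, whereas a $\texttt{SAGE}$ layer written with concatenation would apply independent maps to the self and neighbor parts; the argument must therefore commit to the mean-aggregator variant, whose single-matrix-on-self-inclusive-mean structure is precisely what $\mathcal{A}_i$ followed by $\mathcal{M}_i$ realizes, and justify that this is the intended sense of ``equivalent message passing modes.'' A secondary subtlety is that $\mathcal{M}_i$ is nominally agent-specific and ordinarily evolves under the update rule~\eqref{eq_update_M}; because the theorem asserts only \emph{existence}, it suffices to fix all agents' matrices to the common $W^{k}$ directly, so the fusion dynamics need not be invoked, but I would state plainly that this configuration is legitimately attainable. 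I would also make explicit the reading of $\mathbf{z}_i$ in $\mathcal{M}_i(\mathbf{z}_i)$ as the propagated feature state carried across rounds, which is what licenses the layer-by-layer composition.
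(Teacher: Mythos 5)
Your construction for the mean-aggregator case is essentially the paper's own first step: set $\mathbf{w}_i$ to the all-ones vector so that $\mathcal{A}_i$ collapses to the self-inclusive mean normalized by $\mathrm{deg}_i+1$, instantiate $\mathcal{M}_i$ as a single shared matrix followed by $\sigma$, and then lift the one-round/one-layer correspondence to depth $K$. The paper phrases the depth-$K$ step as an equality of nested-function compositions ($\mathrm{AGG}_K$ versus the $K$-fold composition of per-round $\mathrm{LEARN}$ functions across communication rounds), but your induction on the round index is the same argument in different clothing, and your remark that $\mathcal{M}_i(\mathbf{z}_i)$ must be read as acting on the propagated feature state is exactly the point the paper makes via its $\mathrm{LEARN}$ formalism.

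The genuine gap is your explicit decision to ``commit to the mean-aggregator variant.'' The theorem quantifies over \emph{any} $\texttt{SAGE}$, and GraphSAGE ships with several aggregators; the paper's proof does not stop at the mean case but separately simulates the mean-pooling aggregator, whose layer $\sigma\bigl(\mathbf{W}^1_{\texttt{S}}\cdot\mathrm{MeanPool}(\{\sigma(\mathbf{z}_i,\mathbf{z}_j\mathbf{W}_{\texttt{P}}+\mathbf{b})\})\bigr)$ applies a nonlinear per-neighbor transform \emph{before} aggregation and therefore cannot be written as a single matrix acting on a scalar-weighted blend. The paper closes this case by choosing feature-dependent attention weights, $\omega_{i,j}=1+\mathbf{b}\mathbf{W}_{\mathbb{M}}^{-1}\mathbf{z}_j^{+}$ for $i\neq j$ and $\omega_{i,i}=\mathbf{z}_i\mathbf{W}_{\mathbb{M}}^{-1}\mathbf{z}_i^{+}$ (with $\mathbf{z}_j^{+}$ a pseudoinverse), and by widening $\mathcal{M}_i$ to a two-layer network with an internal pooling, so that the per-neighbor affine map $\mathbf{z}_j\mathbf{W}_{\mathbb{M}}+\mathbf{b}$ is reconstituted inside the aggregate; indeed the paper's final sentence establishes equivalence with the $K$-depth \emph{pool}-aggregator $\texttt{SAGE}$. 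As written, your proof establishes only a strict sub-case of the stated theorem; you would need either to add a construction of this kind for the pooling variant (and argue, or concede, the LSTM variant) or to weaken the statement so that $\texttt{SAGE}$ ranges only over mean-aggregator instantiations.
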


We then propose the corollary to guide the design of the embedding function:

\begin{corollary}\label{cor_1}
If model $\mathcal{M}_i$ is a single-layer neural network with trainable parameters as a $d_z \times d_L$ matrix, then $\mathbb{M}_i$ is universally capable of embedding any feature space onto the corresponding label space.
\end{corollary}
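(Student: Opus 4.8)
The plan is to derive Corollary~\ref{cor_1} directly from Theorem~\ref{thm_1} by transferring the known computational universality of GraphSAGE onto $\mathbb{M}$. Since Theorem~\ref{thm_1} shows that $\mathbb{M}$ reproduces any GraphSAGE computation (including the $K$-depth case realized via $K$ communication rounds), every representational capacity enjoyed by GraphSAGE is inherited by $\mathbb{M}$; it then only remains to verify that the claimed single-matrix instantiation is the one realizing this equivalence, so no deep layers are needed inside $\mathcal{M}_i$.

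First, I would isolate the mean-aggregator branch of the proof of Theorem~\ref{thm_1}, i.e. Eq.~\eqref{eq_M_int_1}, where $\mathcal{M}_i$ is instantiated as one left-multiplication by a trainable matrix $\mathbf{W}_{\mathbb{M}}$ followed by the activation $\sigma$, and where the resulting node embedding coincides with a $1$-depth mean-aggregator GraphSAGE output whenever $\mathbf{W}_{\mathbb{M}}\sim\mathbf{W}^1_{\texttt{S}}$. Because $\mathcal{M}_i:\mathbb{R}^{1\times d_z}\to\mathbb{R}^{1\times d_L}$ maps the feature space to the label space, this single matrix is forced to have shape $d_z\times d_L$, matching the statement of the corollary; crucially, the depth needed for expressivity is supplied by the nesting of the single-round learning functions over communication rounds (Eq.~\eqref{eq_nasted_LEARN}), not by additional layers inside $\mathcal{M}_i$.

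Second, I would invoke the computational universality of GNNs established in~\cite{Loukas2019GNN_DvW,xu2018Power}: for any attributed graph and any target labelling there exist GraphSAGE parameters whose global output realizes it. Combining this with the equivalence $\mathbb{M}(\mathcal{G})=\texttt{SAGE}(\mathcal{G})$ for all $\mathcal{G}\in G$ from Theorem~\ref{thm_1}, I would conclude that for any desired embedding of the feature space onto the label space there is a corresponding instantiation of $\mathbf{W}_{\mathbb{M}}$ realizing it through $\mathbb{M}$, which establishes the claimed universality of the single-layer $\mathcal{M}_i$.

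The main obstacle is not a hard estimate but careful bookkeeping of the direction and the depth at which the equivalence is applied. Theorem~\ref{thm_1} is phrased as ``for any $\texttt{SAGE}$ there exists $\mathbb{M}$'', so I must ensure that the borrowed universality statement concerns the GraphSAGE family that $\mathbb{M}$ reproduces, and that I instantiate at the \emph{mean} aggregator so that exactly one $d_z\times d_L$ matrix appears, with no biases, pooling matrices, or deep layers. A secondary check is that fixing the nonlinearity $\sigma$ does not shrink the reachable label space, which holds because the cited universality results already absorb the activation and the one-hot targets lie within its attainable range.
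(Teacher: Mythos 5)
Your proposal is correct and follows essentially the same route as the paper: the paper's own proof is the transitive chain $\mathcal{M}_i=\mathrm{L{\scriptstyle EARN}}_{\infty}=\mathrm{A{\scriptstyle GG}}_{\infty}=\mathcal{T}$, i.e., it combines the equivalence of Theorem~\ref{thm_1} with the GNN universality result of~\cite{Loukas2019GNN_DvW}, exactly as you do. Your additional bookkeeping --- that the $d_z\times d_L$ single-matrix instantiation comes from the mean-aggregator branch and that expressive depth is supplied by the nesting of $\mathrm{L{\scriptstyle EARN}}$ over communication rounds rather than by layers inside $\mathcal{M}_i$ --- is a more explicit rendering of what the paper leaves implicit, not a different argument.
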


For the structure of the degree inference function, we arrive at the following conclusions:

\begin{theorem}\label{thm_2}
  Given an attribute graph $\forall \mathcal{G} \in G$, it is possible to map its feature matrix to any label matrix using only a single linear transformation (i.e., a trainable matrix) and a nonlinear activation.
\end{theorem}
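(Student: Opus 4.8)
The plan is to recast the claim as the solvability of a single matrix equation and then reduce it to a standard rank condition on the feature matrix. Stacking the node features row-wise gives $Z \in \mathbb{R}^{|\mathcal{V}| \times d_z}$ and the prescribed labels give $Y \in \mathbb{R}^{|\mathcal{V}| \times d_L}$ (with $d_L = \mathrm{deg}_{\mathrm{max}}$ in the degree-inference case). The single linear transformation together with the activation realizes the map $Z \mapsto \sigma(ZW)$ for a trainable $W \in \mathbb{R}^{d_z \times d_L}$, so the assertion amounts to exhibiting a $W$ with $\sigma(ZW) = Y$. First I would fix $\sigma$ to be a strictly monotone activation that is invertible onto its range and read the target entries inside that range, so that the nonlinear equation collapses to the linear one $ZW = \widetilde{Y}$, where $\widetilde{Y} := \sigma^{-1}(Y)$ is the pre-activation target obtained by applying $\sigma^{-1}$ entrywise. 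Since classification is ultimately read off by $\mathrm{argmax}$, using interior soft targets here is harmless: exact recovery of $\widetilde{Y}$ preserves the intended one-hot decision.

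Next I would treat $ZW = \widetilde{Y}$ column by column: writing $w_k$ and $\widetilde{y}_k$ for the $k$-th columns of $W$ and $\widetilde{Y}$, the equation splits into $d_L$ independent systems $Z w_k = \widetilde{y}_k$. By the Rouch\'{e}--Capelli criterion each such system is consistent for every right-hand side precisely when $Z$ has full row rank $|\mathcal{V}|$; under that condition a uniform closed-form solution is $W = Z^{+}\widetilde{Y}$, where $Z^{+} = Z^{\top}(ZZ^{\top})^{-1}$ is a right inverse of $Z$ (the Moore--Penrose pseudoinverse), so that $ZZ^{+}\widetilde{Y} = \widetilde{Y}$ and hence $\sigma(ZW) = Y$ exactly. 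This produces the desired single matrix and completes the mapping from the feature matrix to the prescribed label matrix.

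The hard part will be justifying the full-row-rank hypothesis on $Z$, since $\mathrm{rank}(Z) \le \min(|\mathcal{V}|, d_z)$ and the rank can reach $|\mathcal{V}|$ only when $d_z \ge |\mathcal{V}|$ and the node features sit in general position. I would dispatch the generic case by observing that distinct, linearly independent feature vectors make $ZZ^{\top}$ nonsingular, and treat the degenerate case separately. To remove the dimension restriction altogether I would appeal to the expressive machinery already developed: by Theorem~\ref{thm_1} and Corollary~\ref{cor_1}, sufficient inter-agent communication lets the single-matrix function inherit the Turing-complete aggregation power of $\mathrm{A{\scriptstyle GG}}_{\infty}$, so the effective input seen by $\mathcal{D}_i$ is the \emph{communicated} feature rather than the raw one, and this refined representation can be taken to separate all nodes and thus restore full row rank. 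I expect reconciling the stated single-matrix capacity with this rank obstruction to be the delicate point, whereas the algebraic solvability argument becomes routine once the activation has been inverted.
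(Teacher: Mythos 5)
Your proposal takes essentially the same route as the paper's proof: both reduce the statement to the linear system $\mathbf{Z}\mathbf{W}=\mathbf{L}$ and solve it with $\mathbf{W}=\mathbf{Z}^{\dagger}\mathbf{L}$, where $\mathbf{Z}^{\dagger}=\mathbf{Z}^{\top}(\mathbf{Z}\mathbf{Z}^{\top})^{-1}$, so both hinge on exactly the full-row-rank condition you flag as the delicate point --- the paper simply asserts that $\mathbf{Z}$ is ``typically'' row full rank (with a citation) and softens the conclusion to an $\epsilon$-level MSE bound via a Tikhonov-regularized limit and Taylor expansion rather than exact equality. Your explicit inversion of the activation is in fact more careful than the paper's treatment, which applies $\sigma$ to $\mathbf{Z}\mathbf{W}$ and still compares the result against the un-activated target $\mathbf{L}$.
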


For the neighboring confidence function's structure, we present the following findings:

\begin{theorem}\label{thm_3}
For any graph $\mathcal{G}$, there exists a fixed $2d_z$-dimensional weight vector $\mathbf{q}=[q_1, \ldots, q_{2d_z}]$, such that for any two nodes $v_i$ and $v_j$ in $\mathcal{G}$ and their true neighboring relationship $\phi$ (where $\phi$ is a relative minimum or converges to 1), $\mathrm{Concat}_r(\mathbf{z}_i,\mathbf{z}_j)\mathbf{q}^{\top} = \phi$ is solvable.
\end{theorem}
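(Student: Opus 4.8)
The plan is to recast the statement as the solvability of a single (over-determined) linear system and then transfer the Moore-Penrose argument already established in Theorem~\ref{thm_2}. First I would enumerate every pair of nodes whose neighboring relationship must be decided and stack their concatenated features as the rows of a design matrix $\mathbf{C}\in\mathbb{R}^{m\times 2d_z}$, so that the $l$-th row is $\mathrm{Concat}_r(\mathbf{z}_i,\mathbf{z}_j)$ for the $l$-th pair. Collecting the corresponding targets---a value converging to $1$ when $v_i,v_j$ are ground-truth neighbors and a relative minimum otherwise---into a column $\bm{\phi}\in\mathbb{R}^{m}$, the claim reduces to exhibiting a fixed $\mathbf{q}$ with $\mathbf{C}\mathbf{q}^{\top}=\bm{\phi}$.

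Second, because $\mathbf{C}$ is rectangular and generally not of full rank, I would define the candidate weight vector through the MP pseudoinverse, $\mathbf{q}^{\top}=\mathbf{C}^{\dagger}\bm{\phi}$, using the singular value decomposition $\mathbf{C}=\mathbf{U}\mathbf{D}\mathbf{V}^{\top}$ with $\mathbf{C}^{\dagger}=\mathbf{V}\mathbf{D}^{+}\mathbf{U}^{\top}$, exactly as in Theorem~\ref{thm_2}. The Tikhonov-regularized identity $\mathbf{C}^{\dagger}=\lim_{\alpha\to 0}\mathbf{C}^{\top}(\mathbf{C}\mathbf{C}^{\top}+\alpha\Upsilon)^{-1}$, together with the Taylor expansion of $(\mathbf{C}\mathbf{C}^{\top}+\alpha\Upsilon)^{-1}$ in $\alpha$, then lets me bound the residual and conclude $\mathrm{MSE}(\mathbf{C}\mathbf{q}^{\top},\bm{\phi})<\epsilon$ for every $\epsilon>0$, establishing the approximate solvability the theorem asserts. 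Since $\mathbf{C}$ and $\bm{\phi}$ are fixed once $\mathcal{G}$ is given, the resulting $\mathbf{q}$ is itself fixed and $2d_z$-dimensional, as required.

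Third, I still have to justify that a single $\mathbf{q}$ can \emph{separate} the two target regimes rather than merely interpolate arbitrary numbers. Here I would lean on the decision-region assumption invoked in the construction of $\mathcal{N}_i$ in Section~\ref{sec_method}, namely that first-order and second-order neighbors occupy distinguishable similarity patterns, so that the concatenated vectors $\mathrm{Concat}_r(\mathbf{z}_i,\mathbf{z}_j)$ of neighbor pairs and non-neighbor pairs lie in linearly separable regions; under this assumption the least-squares fit of $\bm{\phi}$ genuinely drives neighbor outputs toward $1$ and non-neighbor outputs toward the relative minimum, so that $\phi$ is recoverable as a binary-classification margin.

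The step I expect to be the main obstacle is the transition from \emph{least-squares optimality} to \emph{vanishing residual}. Because a graph contributes up to $|\mathcal{V}|^2$ pairs while $\mathbf{q}$ carries only $2d_z$ free parameters, the system is massively over-determined and $\bm{\phi}$ need not lie in the range of $\mathbf{C}$; the pseudoinverse delivers the best approximation, but pushing its error below an arbitrary $\epsilon$ requires a rank/structure hypothesis on $\mathbf{C}$ analogous to the row-full-rank assumption on $\mathbf{Z}$ used in Theorem~\ref{thm_2}. Making that hypothesis explicit---and reconciling it with the ``relative minimum'' (rather than a single fixed value) description of the non-neighbor target---is where the argument will need the most care.
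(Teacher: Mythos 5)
Your route is not the one the paper takes, and the obstacle you flag in your final paragraph is a genuine, fatal gap rather than a detail to be tidied up. The Theorem~\ref{thm_2} machinery you want to transfer rests on the identity $\mathbf{C}^{\dagger}=\lim_{\alpha\to 0}\mathbf{C}^{\top}(\mathbf{C}\mathbf{C}^{\top}+\alpha\Upsilon)^{-1}$ followed by a Taylor expansion of $(\mathbf{C}\mathbf{C}^{\top}+\alpha\Upsilon)^{-1}$ around $(\mathbf{C}\mathbf{C}^{\top})^{-1}$; this requires $\mathbf{C}\mathbf{C}^{\top}$ to be invertible, i.e., the design matrix to have full \emph{row} rank, which forces the number of rows to be at most the number of columns. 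Your pairwise design matrix has on the order of $|\mathcal{V}|^2$ rows against only $2d_z$ columns, so $\mathbf{C}\mathbf{C}^{\top}$ is necessarily singular, the expansion is unavailable, and $\mathbf{q}^{\top}=\mathbf{C}^{\dagger}\bm{\phi}$ returns only the orthogonal projection of $\bm{\phi}$ onto a column space of dimension at most $2d_z$. The residual $\bigl\|(\mathbf{I}-\mathbf{C}\mathbf{C}^{\dagger})\bm{\phi}\bigr\|$ is then generically bounded away from zero and cannot be driven below an arbitrary $\epsilon$; no hypothesis on the rank of $\mathbf{C}$ alone can repair this, since what is actually needed is that $\bm{\phi}$ lie in (or arbitrarily near) the range of $\mathbf{C}$, a strong structural assumption tying the feature geometry to the adjacency structure. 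The linear-separability assumption you invoke in your third step could support a weaker classification statement (threshold or sign agreement), but not the interpolation statement $\mathbf{C}\mathbf{q}^{\top}=\bm{\phi}$ into which you have recast the theorem.

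For comparison, the paper's own proof does not set up a stacked linear system at all. It splits the candidate $\mathbf{q}$ into two diagonal scaling maps $\mathbf{D}_A=\mathrm{diag}(q_1,\ldots,q_{d_z})$ and $\mathbf{D}_B=\mathrm{diag}(q_{d_z+1},\ldots,q_{2d_z})$ acting on two copies of the feature space, splits the target as $\phi=\phi_1+\phi_2$, and argues that $\mathrm{Concat}_c(\mathbf{a},\mathbf{b})\,\mathbf{q}^{\top}=\|\mathbf{D}_A\mathbf{a}\|_1+\|\mathbf{D}_B\mathbf{b}\|_1=\phi_1+\phi_2=\phi$ for scalings chosen to carry $\mathbf{a}$ and $\mathbf{b}$ onto prescribed points of a common space. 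That construction is carried out for an arbitrarily sampled pair rather than for all pairs simultaneously, so it sidesteps the over-determination issue your reformulation exposes; your version has the virtue of making the simultaneity requirement explicit, but the pseudoinverse argument cannot discharge it.
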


\begin{table*}[htbp]
  \centering \small
\setlength{\tabcolsep}{1.25mm}{
    \begin{tabular}{c | c | ccccc c | ccccc c}
    \toprule
        \multicolumn{2}{c |}{Surrogate} & \multicolumn{6}{c |}{GCN (primary attack by Metattack)}  & \multicolumn{6}{c }{Corresponding defenses (secondary attack by G-EPA)}  \\
    \cmidrule(lr){1-2}\cmidrule(lr){3-8} \cmidrule(lr){9-14}
        Dataset  & $r_p$   & RGCN & SVD & Pro & Jaccard & EGNN & GAgN & RGCN & SVD & Pro & Jaccard & EGNN & GAgN \\
  \cmidrule(lr){1-2}\cmidrule(lr){3-8} \cmidrule(lr){9-14}
    Cora & 20  & 58.67 & 57.01 & 63.94 & \underline{72.51} & 69.02 & \textbf{77.01} & 49.17\scriptsize{±0.89} & 47.91\scriptsize{±2.18} & 55.04\scriptsize{±1.11} & \underline{63.11\scriptsize{±1.84}} & 59.52\scriptsize{±1.39} & \textbf{74.91\scriptsize{±0.74}} \\
    Citeseer & 20  & 62.53 & 57.29 & 56.24 & \underline{66.21} & 64.94 & \textbf{70.49} & 54.03\scriptsize{±2.00} & 48.79\scriptsize{±1.77} & 47.74\scriptsize{±1.59} & \underline{57.71\scriptsize{±1.57}} & 56.44\scriptsize{±1.94} & \textbf{69.19\scriptsize{±0.91}} \\
    Polblogs & 20  & 58.36 & 54.87 & 73.10 & 69.87 & \underline{75.42} & \textbf{80.92} & 49.86\scriptsize{±1.17} & 46.37\scriptsize{±2.71} & 64.60\scriptsize{±1.35} & 61.37\scriptsize{±2.07} & \underline{66.92\scriptsize{±1.54}} & \textbf{78.42\scriptsize{±2.38}} \\
    Pubmed & 20  & 71.20 & 81.24 & \underline{82.82} & 76.39 & 79.06 & \textbf{82.98}  & 62.70\scriptsize{±1.34} & 72.74\scriptsize{±0.94} & \underline{74.07\scriptsize{±0.75}} & 67.89\scriptsize{±0.73} & 70.56\scriptsize{±0.79} & \textbf{78.29\scriptsize{±0.67}} \\
    \bottomrule
    \end{tabular}
    }
    \caption{Classification accuracy (\%) under primary and secondary attackm with the highest scores in \textbf{bold} and the second highest \underline{underlined}. $r_p$ is the perturbation rate. SVD and Pro is the abbreviation of GNN-SVD and Pro- GNN respectively.}
  \label{tab:GlobalAcc}%
\end{table*}%

\begin{figure*}[htb]
    \centering
    \begin{minipage}[b]{0.325\textwidth}
        \centering
        \includegraphics[width=\textwidth]{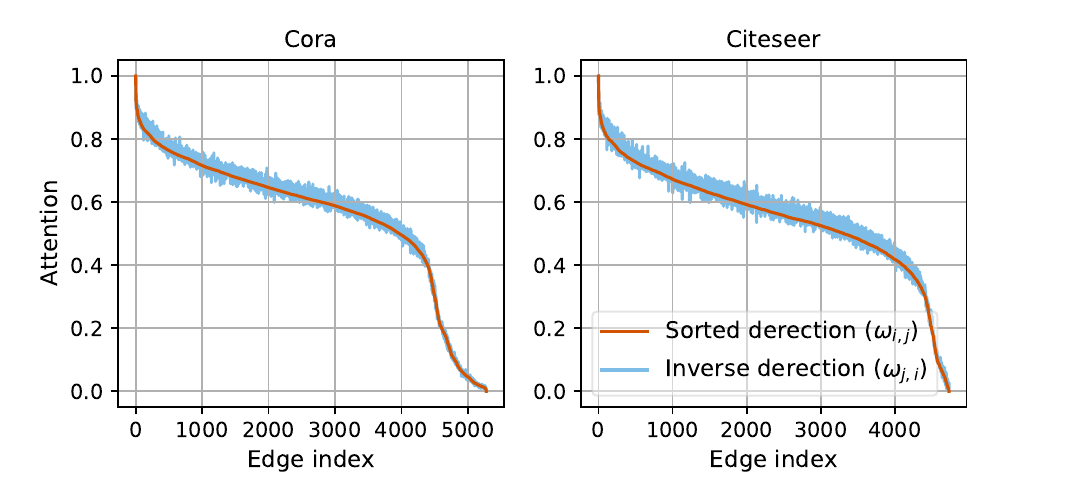}
        \caption{Attentions of random direction \& inverse direction on the same edges.}
        \label{fig_sym}
    \end{minipage}
    \hspace{0.03cm}
    \begin{minipage}[b]{0.308\textwidth}
        \centering
        \includegraphics[width=\textwidth]{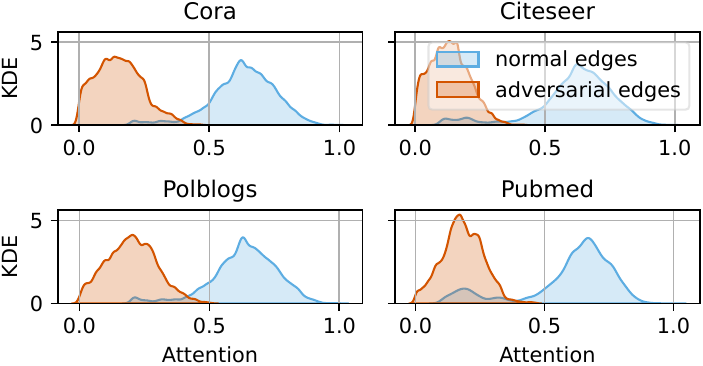}
        \caption{Distribution of attentions on normal and adversarial edges.}
        \label{fig_KDE}
    \end{minipage}
    \hspace{0.03cm}
    \begin{minipage}[b]{0.300\textwidth}
        \centering
        \includegraphics[width=\textwidth]{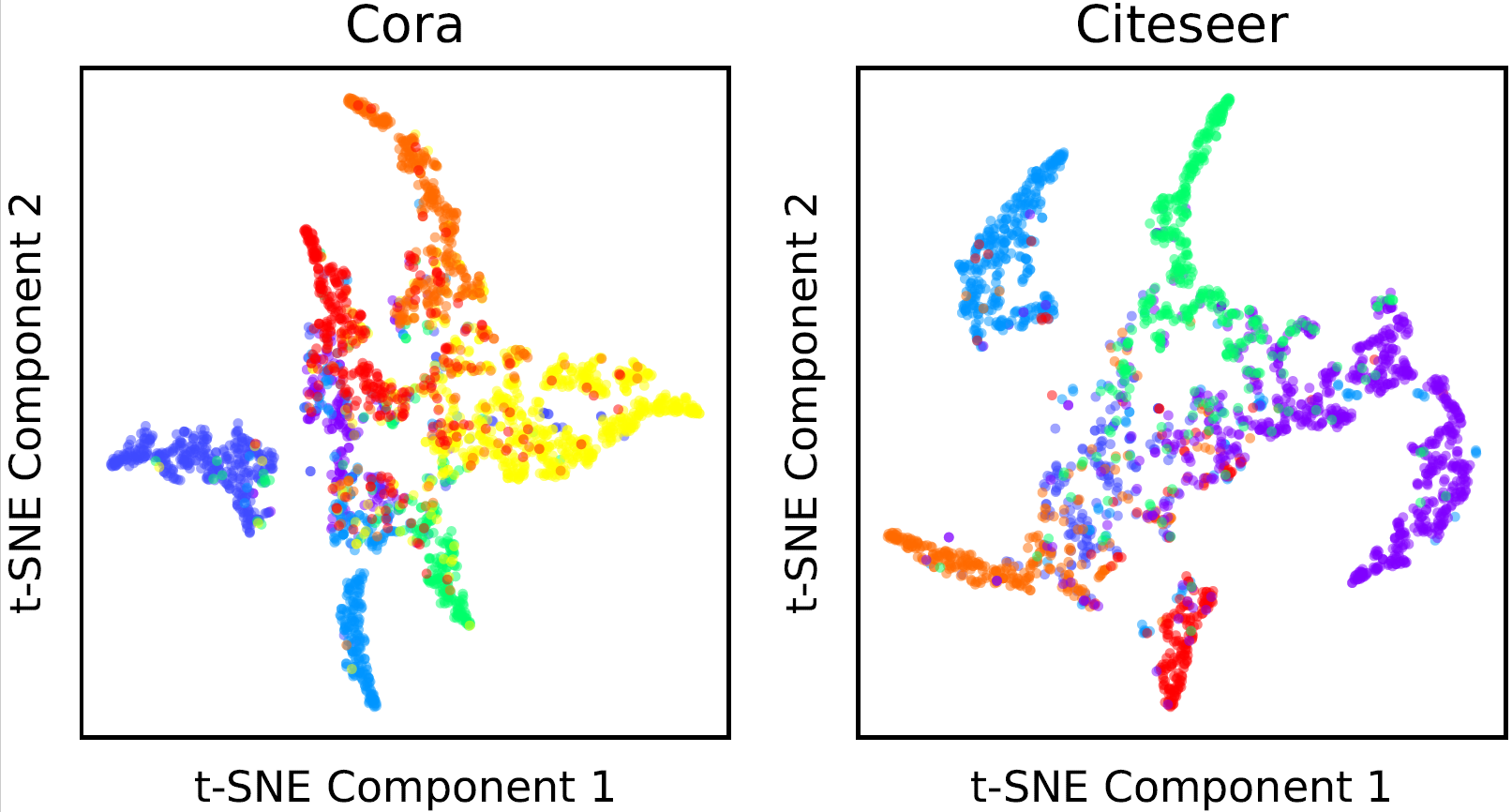}
        \caption{Well-trained agents generated embedding.}
        \label{fig_emb_single}
    \end{minipage}
\end{figure*}

\section{Experiments}


\textbf{Datasets}. Our approaches are evaluated on six real-world datasets widely used for studying graph adversarial attacks~\cite{Sun_2020_WWW,Liu_2022_TNNLS, entezari2020SVD,li2022STABLE,liu2024towards}, including {Cora}, {Citeseer}, {Polblogs}, and {Pubmed}.

\noindent \textbf{Baselines.} The GAgN model protects non-defense GNNs from edge-perturbing attacks and outperforms other defensive approaches. Baseline models are evaluated as follows:
\textit{Comparison defending models.} GAgN is compared against:
   1) {RGCN}, which uses Gaussian distributions for node representations to mitigate adversarial effects,
   2) {GNN-SVD}, which applies truncated SVD for a low-rank adjacency matrix approximation,
   3) {Pro-GNN}~\cite{jin2020Pr0GNN}, which enhances GNN robustness through intrinsic node properties,
   4) {Jaccard}~\cite{wu2019jaccard}, defending based on Jaccard similarity,
   5) {EGNN}~\cite{liu2021EGNN}, filtering perturbations via $\ell_1$- and $\ell_2$-based smoothing.
\textit{Attack methods.} Experiments incorporate:
   1) {Metattack}, a meta-learning based strategy,
   2) {G-EPA}~\cite{Liu_2022_TNNLS}, a generalized edge-perturbing attack approach.
The perturbation rate for attacks is set at 20\%, a standard in the field unless otherwise specified.

%


\subsubsection{Classification Accuracy}
We evaluate the global classification accuracy of the proposed GAgN against primary and secondary attacks. Our experimental design uses GCN and its defenses as surrogates for Metattack and G-EPA, where Metattack represents the most potent primary attack and G-EPA an effective secondary strategy. Accuracy results are detailed in Table~\ref{tab:GlobalAcc}.
As GAgN closes global input-output interfaces, it cannot directly serve as a surrogate in secondary attacks. To compensate, we transfer perturbed graphs from baseline models under secondary attacks to the GAgN test set. Five datasets are conducted, presenting average accuracies and variation ranges to illustrate GAgN’s resilience against secondary attacks. The diversity of the baselines likely covers potential vulnerabilities in defenses.

Our results indicate that GAgN achieves the best classification accuracy under both primary and secondary attacks.
This is particularly noteworthy, as the lack of global input-output interfaces in GAgN makes it difficult for adversaries to exploit its vulnerabilities through global training, rendering it nearly impervious to secondary attacks.

\subsubsection{Effectiveness of Aggregator}~\label{sec_sym}

\noindent \underline{Symmetry of Attentions}.
In a well-trained GAgN, adjacent agents should learn similar attentions for their connecting edges (i.e., $\omega_{i,j} \approx \omega_{j,i}$) under the absence of collusion. To validate this, we first arbitrarily assign directions to the edges in the graph, defining the attention of any edge $e_{i,j}$ as $\omega_{i,j}$ from node $v_i$ to $v_j$. This results in a new graph with reversed attentions for all edges. We use a line chart to compare forward and reverse attentions for the same edges after learning in the GAgN model. For a certain edge's smoothness, we sort the forward attention, obtain the sorted edge index, and use this index to find the corresponding reverse attention, examining variations in attentions.
As shown in Figure~\ref{fig_sym}, the variations between forward and reverse attentions are nearly identical, indicating that adjacent agents learn similar attentions. This confirms the effectiveness of the neighbor feature aggregator in the GAgN model.
\emph{To demonstrate consistency among adjacent agents in learning attention for the same edge, we selected the node with the highest degree (168) from Cora. Experimental results, presented in Appendix~\ref{sec_APP_sym}, show the learning curves of neighboring agents' attentions on the same edge.}



\noindent \underline{Distribution of Attentions}.
Here we investigate the attention distribution learned by agents for both normal edges and adversarial edges. A significant distinction between the two would indicate that agents, through communication, have become aware that lower weights should be assigned to adversarial edges, thereby autonomously filtering out information from illegitimate neighbors introduced by such edges. As the previous set of experiments have demonstrated the symmetry of attention, we present the attention of a randomly selected agent on one end of the edge. The experimental results, as depicted in Figure~\ref{fig_KDE}, reveal a notable difference in the kernel density estimation (KDE) of attentions between normal edges and adversarial edges (generated by Metattack). This outcome substantiates that agents, by training the aggregator, have acquired the capability to autonomously filter out adversarial edges.


\subsubsection{Effectiveness of Embedding}

After communication, the agent uses a well-trained embedding function to map its feature into the label space. We visualize the global embeddings to assess their effectiveness. If agents' embeddings, trained only on their labels under limited knowledge, still show global clustering, it indicates the embedding function's success. Figure~\ref{fig_emb_single} shows the t-SNE visualization of Cora and Citeseer embeddings~\cite{van2008visualizing}, with points of different colors indicating distinct categories. The clear clustering of points validates the embedding function's efficacy.
\emph{Additionally, the visualization of the embedding process is presented in Appendix~\ref{sec_emb_sym}, illustrating how GAgN progressively learns effective embeddings.}

\begin{table}[ht]
    \centering \small
    \begin{tabular}{c|c c c c}
    \toprule
    & Cora & Citeseer & Polblogs & Pubmed \\
    \midrule
    L-N & 90.35 & 89.21 & 92.41 & 81.08 \\
    S-N & 95.31 & 94.19 & 96.32 & 83.45 \\
    D-N & 92.64 & 91.08 & 92.57 & 83.06 \\
    \bottomrule
    \end{tabular}
    \caption{Degree inference accuracy (\%)}
    \label{tab_degree}
\end{table}

\subsubsection{Effectiveness of Degree Inference}

Each agent possesses the ability to infer the degree of any node based on the degree inference function. To validate the effectiveness of this inference, we select 150 representative nodes as test sets after training the GAgN model on the corresponding clean graph, following these rules: 1) the top 50 nodes with the highest degree, denoted as L-N; 2) the top 50 nodes with the lowest degree, denoted as S-N; 3) the 50 nodes furthest away from the inference node in terms of graph distance, denoted as D-N. Notably, to avoid training set leakage into the test set, an agent's 1-hop neighbors and select 2-hop neighbors are excluded from its test set. The experimental results, as shown in Table~\ref{tab_degree}, demonstrate that even without data on the test nodes, agents can generalize their inference capabilities to the nodes of the graph under limited knowledge training, effectively inferring the degree of nodes.


\subsubsection{Effectiveness of Neighboring Confidence}


To validate the neighboring confidence function on graph $\mathcal{G}$, $|\mathcal{V} \times \mathcal{V}|$ matrix operations are needed. To reduce computational complexity and ensure fair evaluation, we construct four representative graphs and use their neighbor relationships as test samples:
1) The original graph $\mathcal{G}_{0}$, with all labels set to 1.
2) A graph $\mathcal{G}_{1}$ with randomly distributed edges, labeled based on whether edges are original.
3) A randomly rewired graph $\mathcal{G}_{2}$, with all labels set to 0.
4) A graph $\mathcal{G}_{3}$ with only adversarial edges, all labeled 0.
We randomly select 50 nodes to compute the confidence for neighbor relationships, treating this as a binary classification problem. Classification accuracy measures the function's effectiveness. Figure~\ref{fig_neighbora} shows that the neighboring confidence function performs well across all test sets, demonstrating its effectiveness.

\begin{figure}[htb]
\centering
\includegraphics[width=0.46\textwidth]{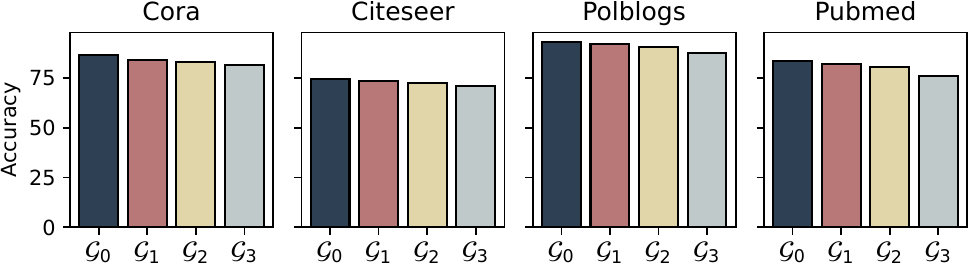}
\caption{Accuracy of neighboring Confidence function.}
\label{fig_neighbora}
\end{figure}

\begin{figure}[htb]
\centering
\includegraphics[width=0.46\textwidth]{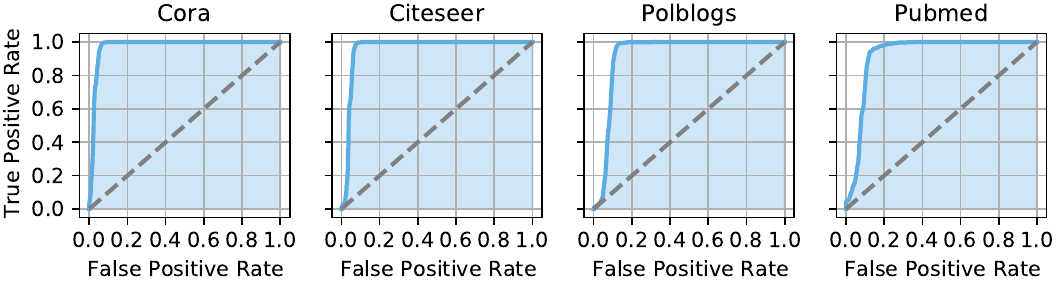}
\caption{ROC curves with different attention threshold.}
\label{fig_ROC}
\end{figure}

\subsubsection{Ablation Experiments}
We employed a multi-module filtering approach to reduce computational complexity when filtering adversarial edges. To investigate the individual effects of these modules when operating independently, it is crucial to conduct ablation experiments.
Section~\ref{sec_sym} has explored the distribution patterns of attention scores for normal and adversarial edges. Edges with low attention scores are initially considered suspicious, which implies that the attention threshold may influence the detection rate and false alarm rate. To quantitatively examine the impact of threshold selection on these rates, we utilize receiver operating characteristic (ROC) curves.
As depicted in Figure~\ref{fig_ROC}, the detection rate is relatively insensitive to the attention threshold. This allows for a high detection rate while maintaining low false alarm rates.
\begin{table}[htbp]
\small
  \centering
  \setlength{\tabcolsep}{0.8mm}
    \begin{tabular}{c|cc|cc}
    \toprule
    Dataset & \multicolumn{2}{c|}{Cora} & \multicolumn{2}{c}{Citeseer}  \\
    \midrule
    Metrics & TP \scriptsize{(TPR)} & FP \scriptsize{(FPR)} & TP \scriptsize{(TPR)} & FP \scriptsize{(FPR)} \\
        \cmidrule(lr){1-1}\cmidrule(lr){2-3} \cmidrule(lr){4-5}

    \emph{{Att.}} & 928 \scriptsize{(85.52\%)} & 302 \scriptsize{(5.56\%)} & 782 \scriptsize{(82.66\%)} & 341 \scriptsize{(7.20\%)} \\
    {\emph{Att.}+$\mathcal{D}$+$\mathcal{N}$} & 901 \scriptsize{(83.04\%)} & 47 \scriptsize{(0.87\%)} & 753 \scriptsize{(79.60\%)} & 55 \scriptsize{(1.16\%)} \\
    \bottomrule
    \end{tabular}
    \caption{Results of ablation experiments}
  \label{tab_degree}%
\end{table}%

%

After identifying suspicious edges based on attention, the degree inference and neighboring confidence functions help filter out normal edges, reducing false positives. We examine the effectiveness of these functions in detecting adversarial edges while retaining normal ones. True positives (TP) and true positive rate (TPR) measure detection accuracy, while false positives (FP) and false positive rate (FPR) measure normal edge misclassification. We use Metattack for testing. Table 3 shows results: ``\emph{Att.}'' refers to detection by attention alone, and ``\emph{Att.}+$\mathcal{D}$+$\mathcal{N}$'' refers to combined detection. Detection using only attention has a higher rate but also more false positives. Adding degree inference and neighboring confidence reduces false positives significantly without majorly affecting detection rates.

\section{Conclusions}

We proposed GAgN for addressing the inherent vulnerabilities of GNNs to edge-perturbing attacks.
By adopting a decentralized interaction mechanism, GAgN facilitates the filtration of adversarial edges and thwarts global attacks.
The theoretical sufficiency for GAgN further simplifies the model, while experimental results validate its effectiveness in resisting edge-perturbing attacks.

\section{Acknowledgments}
This work is supported in part by the National Key R\&D Program of China (No. 2022YFB3102100, 2020YFB1805400), 
the National Science Foundation of China (No. U22B2027, 62032002, 62372313, 62172297, 62476107),
the China Postdoctoral Science Foundation (No. 2024M752211),
the Sichuan Science and Technology Program (No. 24QYCX0417),
and the Youth Science Foundation of Sichuan (No. 25QNJJ4560).

\bibliography{sample-base}

\clearpage

\appendix

\section{Proofs}

\subsection{Proof of Theorem~\ref{thm_1}}~\label{sec_proof_T1}

  First, we demonstrate that the first-order message passing of $\mathbb{M}$ and $\texttt{SAGE}$ is equivalent (meaning that the depth of $\texttt{SAGE}$ is 1). This equivalence arises from the diverse calculation methods for $\texttt{SAGE}$, which depend on the selection of its aggregator. By examining different instantiations of $\mathcal{A}$ and $\mathcal{M}$, we establish their respective equivalences.

  GraphSAGE [1] that employs mean aggregator updates the feature (i.e., a single round of forward propagation) as
  \begin{equation}\label{eq_mean_agg}
    \mathbf{z}_i^{\texttt{S}} = \sigma\left( \mathbf{W}^1_{\texttt{S}} \cdot \mathrm{Mean}(\{\mathbf{z}_i\}\cup \{{\mathbf{z}_j:v_j\in \mathrm{N}_i}\})  \right),
  \end{equation}
  where $\mathbf{W}^1_{\texttt{S}}$ is the weight matrix of 1-hop neighborhood, and $\mathrm{MEAN}(\cdot)$ is the mean operator. It is not difficult to see that by instantiating $\mathbf{w}_i$ as an all-ones vector $\mathbf{I}'$ with $\mathrm{deg}_i + 1$ items, $\mathcal{A}_i$ can thus aggregate features as
  \begin{align}\label{eq_A_int_1}
    \mathcal{A}_i(\mathrm{N}_i) &= \frac{\mathbf{I}' \mathrm{Concat}_r( \{\mathbf{z}_i\} \cup \mathrm{N}_i )}{\mathrm{deg}_i + 1} \\
    &= \frac{\mathrm{Sum}( \{\mathbf{z}_i\}\cup \{{\mathbf{z}_j:v_j\in \mathrm{N}_i} \})} {\mathrm{deg}_i + 1} \notag \\
    &= \mathrm{Mean}(\{\mathbf{z}_i\}\cup \{{\mathbf{z}_j:v_j\in \mathrm{N}_i}\}).
  \end{align}
  Then, by instantiating $\mathcal{M}_i$ as a function that computes the output by left-multiplying the input with a trainable matrix $\mathbf{W}_{\mathbb{M}}$ and the nonlinear activation (i.e., instantiated as a fully-connected neural network), we have
  \begin{equation}\label{eq_M_int_1}
    \mathbf{z}^{\mathbb{M}}_i = \mathcal{M}_{i}(\mathcal{A}_i(\mathrm{N}_i)) = \sigma\left(\mathbf{W}_{\mathbb{M}} \cdot \mathrm{Mean}(\{\mathbf{z}_i\}\cup \{\mathbf{z}_j:v_j\in \mathrm{N}_i\}) \right).
  \end{equation}
  Eqs.~\eqref{eq_mean_agg} and~\eqref{eq_M_int_1} respectively represent the computation processes of 1-depth mean-aggregator $\texttt{SAGE}$ and $\mathbb{M}$, when their trainable matrix have same shape, i.e., $ \mathbf{W}_{\mathbb{M}} \sim \mathbf{W}^1_{\texttt{S}}$, their equivalence can be apparently observed. \emph{In our discussion concerning the embedding results at the agent level, it is worth noting that we will no longer employ subscript $i$ to differentiate trainable matrices $\mathbf{W}_{\mathbb{M}}$. It is crucial to recognize that $\mathbf{W}_{\mathbb{M}}$ does not represent a global model, and this notation is solely utilized to demonstrate the aforementioned theory.}

 GraphSAGE that employed mean-pooling aggregator updates the feature as
 \begin{equation}\label{eq_pool_agg}\small
  \mathbf{z}_i^{\texttt{S}} = \sigma \left(\mathbf{W}^1_{\texttt{S}} \cdot \mathrm{MeanPool}\left(\{ \sigma\left( \mathbf{z}_i,  \mathbf{z}_j \mathbf{W}_{\texttt{P}} + \mathbf{b} \right),\forall v_j \in \mathrm{N}_i \} \right)\right),
 \end{equation}
  where $\mathbf{W}_{\texttt{P}}$ is a fully-connected neural network in pooling aggregator, $\mathbf{b}$ is the bias vector.
  Next, we instantiate $\omega_{i,j}$ as

  \begin{equation}\label{eq_W_int_2}
     \omega_{i,j} = \left\{ \begin{array}{ l } 1 + \mathbf{b} \mathbf{W}_{\mathbb{M}} ^{-1} \mathbf{z}_j^{+}, i\neq j \\
     \mathbf{z}_i \mathbf{W}_{\mathbb{M}} ^{-1} \mathbf{z}_i^{+}, i = j  \end{array}   \right. ,
  \end{equation}
  where $\mathbf{z}_j^{+}\in\mathbb{R}^{d_z}$ is the pseudoinverse of $\mathbf{z}_j$.
  Then, we instantiate $\mathcal{M}_i$ as a 2-layer neural network (with trainable matrix $\mathbf{W}_{\mathbb{M}}$ and $\mathbf{W}'_{\mathbb{M}}$) and a pooling layer whose forward propagation is formulated as
  \begin{equation}\label{eq_M_int_2}
    \mathcal{M}(\mathbf{x}) = \sigma \left(\mathbf{W}'_{\mathbb{M}} \cdot   \frac{  \sigma \left(\mathbf{x}  \mathbf{W}_{\mathbb{M}}  \right) }{|\mathrm{N}_i+1|} \right).
  \end{equation}
  $\mathcal{M}_i$ can therefore embed features as
  \begin{align}\label{eq_A_int_2}
  \mathbf{z}^{\mathbb{M}}_i &=
  \mathcal{M}_{i}(\mathcal{A}_i(\mathrm{N}_i)) \notag  \\
  &= \sigma \left(\mathbf{W}'_{\mathbb{M}} \cdot   \frac{ \sigma\left(  \left( \omega_{i,i}\mathbf{z}_i + \sum_{v_j\in\mathrm{N}_i} \omega_{i,j} \mathbf{z}_j\right) \mathbf{W}_{\mathbb{M}} \right) }{|\mathrm{N}_i+1|} \right).
  \end{align}
  Since
  {\small
  \begin{align}\label{eq_pool_1}
   ( \omega_{i,i}\mathbf{z}_i &+ \sum_{v_j\in\mathrm{N}_i} \omega_{i,j} \mathbf{z}_j)\mathbf{W}_{\mathbb{M}} = \omega_{i,i} \mathbf{z}_i \mathbf{W}_{\mathbb{M}} + \sum_{v_j\in\mathrm{N}_i}  \omega_{i,j} \mathbf{z}_j \mathbf{W}_{\mathbb{M}} \notag \\
 &= \mathbf{z}_i \mathbf{W}_{\mathbb{M}} ^{-1} \mathbf{z}_i^{+}  \mathbf{z}_i  \mathbf{W}_{\mathbb{M}} + \sum_{v_j\in\mathrm{N}_i} \left(1 + \mathbf{b} \mathbf{W}_{\mathbb{M}} ^{-1} \mathbf{z}_j^{+}\right) \mathbf{z}_j \mathbf{W}_{\mathbb{M}} \notag  \\
 &= \mathbf{z}_i + \sum_{v_j\in\mathrm{N}_i} \left(\mathbf{z}_j \mathbf{W}_{\mathbb{M}} + \mathbf{b} \mathbf{W}_{\mathbb{M}} ^{-1} \mathbf{z}_j^{+} \mathbf{z}_j \mathbf{W}_{\mathbb{M}}  \right) \notag \\
 &=  \mathbf{z}_i + \sum_{v_j\in\mathrm{N}_i} \mathbf{z}_j {\mathbf{W}_{\mathbb{M}}} + \mathbf{b},
  \end{align}
  }
  substituting Eq.~\eqref{eq_pool_1} into Eq.~\eqref{eq_A_int_2} yields:
  {\small
    \begin{align}\small\label{eq_M_int_3}
  \mathbf{z}^{\mathbb{M}}_i &=
  \mathcal{M}_{i}(\mathcal{A}_i(\mathrm{N}_i)) \notag  \\
  &= \sigma \left(\mathbf{W}'_{\mathbb{M}} \cdot  \frac{ \sigma\left( \mathbf{z}_i + \sum_{v_j\in\mathrm{N}_i} {\mathbf{z}_j \mathbf{W}_{\mathbb{M}}} + \mathbf{b}\right)}{|\mathrm{N}_i+1|} \right) \notag  \\
  &= \sigma \left(\mathbf{W}'_{\mathbb{M}} \cdot \mathrm{MeanPool}\left(  \sigma \left( \{\mathbf{z}_i,  \mathbf{z}_j \mathbf{W}_{\mathbb{M}} + \mathbf{b}\}:v_j \in \mathrm{N}_i \right)  \right)\right).
  \end{align}
  }
  Eqs.~\eqref{eq_M_int_3} and~\eqref{eq_pool_agg} respectively represent the computation processes of $\mathbb{M}$ and 1-depth pool-aggregator $\texttt{SAGE}$. $\mathcal{M}_i$ enables the shape of its trainable matrices to be identical to that of $\mathbf{W}^1_{\texttt{S}}$ and $\mathbf{W}_{\texttt{P}}$, i.e., $ \mathbf{W}'_{\mathbb{M}} \sim \mathbf{W}^1_{\texttt{S}}$ and $\mathbf{W}_{\mathbb{M}} \sim \mathbf{W}_{\texttt{P}}$, leading to the equivalence between the two entities, as their computational patterns align.

The aforementioned analysis demonstrates that GAgN and GraphSAGE are equivalent in the context of 1-depth aggregation. Building upon this conclusion, we will now establish the equivalence of GAgN and GraphSAGE in the case of $K$-depth aggregation as well. We denote a single aggregation in GraphSAGE as $\mathrm{AGG}_1(\cdot)$, then the 1-depth aggregation is formulated as
\begin{equation}\label{eq_1_AGG}
  \mathbf{z}_i^{\texttt{S},K=1}  = \sigma\left( \mathbf{W}^1_{\texttt{S}} \mathrm{A{\scriptstyle GG}}_1(\mathrm{N}_i) \right).
\end{equation}
The 2-depth aggregation is
{\small
\begin{align}\label{eq_2_AGG}
  \mathbf{z}_i^{\texttt{S},K=2} &= \sigma\left( \mathbf{W}^2_{\texttt{S}} \mathrm{A{\scriptstyle GG}}_1( \{\mathbf{z}_j^{\texttt{S},K=1}:v_j \in \mathrm{N}_i\} ) \right) \notag\\
  &= \sigma\left( \mathbf{W}^1_{\texttt{S}} \mathrm{A{\scriptstyle GG}}_1( \left\{\sigma\left( \mathbf{W}^2_{\texttt{S}} \mathrm{A{\scriptstyle GG}}_1(\mathrm{N}_i) \right)\right\}:v_j \in \mathrm{N}_i\} ) \right).
\end{align}
}
We write Eq.~\eqref{eq_2_AGG} as $\mathbf{z}_i^{\texttt{S},K=2}=\mathrm{A{\scriptstyle GG}}_2(\mathrm{N}_i)$. From Eqs~\eqref{eq_1_AGG} and~\eqref{eq_2_AGG}, it can be observed that $\mathrm{A{\scriptstyle GG}}_2(\mathrm{N}_i)$ is a deep nested function [2] based on $\mathrm{A{\scriptstyle GG}}_1(\mathrm{N}_i)$. The nesting rule involves $\mathrm{A{\scriptstyle GG}}_2(\mathrm{N}_i)$ taking the output of $\mathrm{A{\scriptstyle GG}}_1(\mathrm{N}_i)$ as input, re-entering it into $\mathrm{A{\scriptstyle GG}}_1(\mathrm{N}_i)$, and replacing the parameter $\mathbf{W}^1_{\texttt{S}}$ in $\mathrm{A{\scriptstyle GG}}_1(\mathrm{N}_i)$ with $\mathbf{W}^2_{\texttt{S}}$. We denote this nested rules as

\begin{equation}\label{eq_nasted_rule}
  \mathrm{A{\scriptstyle GG}}_2(x) = \left(\mathrm{A{\scriptstyle GG}}_1^{\mathbf{W}^1_{\texttt{S}}} \circ \mathrm{A{\scriptstyle GG}}_1^{\mathbf{W}^2_{\texttt{S}}}\right)  (x),
\end{equation}
where $\mathrm{A{\scriptstyle GG}}_1^{\mathbf{W}^1_{\texttt{S}}}$ denotes the $\mathrm{A{\scriptstyle GG}}_1$ with the parameter $\mathbf{W}^1_{\texttt{S}}$.

Consequently, given an argument $x$, we can write
\begin{equation}\label{eq_nasted_AGGk}
  \mathrm{A{\scriptstyle GG}}_K(x) = \left(\mathrm{A{\scriptstyle GG}}_1^{\mathbf{W}^1_{\texttt{S}}} \circ \ldots \circ \mathrm{A{\scriptstyle GG}}_1^{\mathbf{W}^K_{\texttt{S}}}\right)  (x).
\end{equation}
It is evident that if a set of nested functions $F=\{F_1,\ldots, F_K \}$ satisfies
\begin{itemize}
  \item its initial nesting functions is equivalent to $\mathrm{A{\scriptstyle GG}}_1$, and
  \item its nesting rules are identical with Eq~\eqref{eq_nasted_AGGk},
\end{itemize}
then $F_K$ is equivalent to $\mathrm{A{\scriptstyle GG}}_K$.

During the $e$-th communication round of GAgN, node $v_j$ trains its models $\mathcal{M}_j$ and $\mathcal{A}_j$ via the data received from the I/O interface, thus learning the features of its 1-hop neighbors. We denote $\mathrm{L{\scriptstyle EARN}}^j_1(\cdot)$ as the learning function of $v_j$ within a single communication round. Then, we can write
\begin{equation}\label{eq_SGARN_e_1}
  \mathbf{z}^{\mathbb{M},e}_j  = \mathcal{M}_{j}\left(\mathcal{A}_j\left(\mathrm{N}_j\right)\right) = \mathrm{L{\scriptstyle EARN}}^j_1(\mathrm{N}_j).
\end{equation}
Subsequently, we shift our focus to any neighboring node $v_i$ of $v_j$. In the ($e$+1)-th communication round, $v_i$ learns the feature of $v_j$ by
\begin{equation}\label{eq_SGARN_e_2}
  \mathbf{z}^{\mathbb{M},e+1}_i = \mathrm{L{\scriptstyle EARN}}^i_1\left( \left\{ \mathrm{L{\scriptstyle EARN}}^j_1\left(\mathrm{N}_j\right):v_j \in \mathrm{N}_i \right\} \right).
\end{equation}
At this point, $v_j$ has already integrated the features of its 1-hop neighbors during the $e$-th communication round, and a portion of these neighbors are 2-hop neighbors of $v_i$. Consequently, after 2 communication rounds, $v_i$'s receptive field expands to include its 2-hop neighbors. The learning from 2-hop neighbors is formulated as $\mathbf{z}^{\mathbb{M},e}_j = \mathrm{L{\scriptstyle EARN}}^j_2(\mathrm{N}_j)$. From Eqs.~\eqref{eq_SGARN_e_1} and~\eqref{eq_SGARN_e_2}, it is evident that $\mathrm{L{\scriptstyle EARN}}$ is also a nested function, with the nested rule being:
\begin{equation}\label{eq_nasted_LEARN}
  \mathrm{L{\scriptstyle EARN}}^j_K(x) = \underbrace{\left(\mathrm{L{\scriptstyle EARN}}^j_1 \circ \mathrm{L{\scriptstyle EARN}}^i_1 \circ \ldots \circ \mathrm{L{\scriptstyle EARN}}_1^{j_K} \right)}_{K\text{ items}},
\end{equation}

where $v_{j_{K}}$ is the arbitrary K-hop neighbor of $v_j$. The equivalence of and $\mathbb{M}$ and 1-depth aggregation supports that $\mathrm{L{\scriptstyle EARN}}^j_1=\mathrm{A{\scriptstyle GG}}_1$. Further, since the agents enables the shape of its trainable matrices to be identical to that of $\{\mathbf{W}^1_{\texttt{S}},\ldots,\mathbf{W}^K_{\texttt{S}}\}$, we can deduce that the nested rule of the learning function of GAgN (formulated as Eq.~\eqref{eq_nasted_LEARN}) and the aggregator in GraphSAGE (formulated as Eq.~\eqref{eq_nasted_AGGk}) are equivalent. Consequently, the equivalence between $\mathbb{M}$ and $K$-depth pool-aggregator $\texttt{SAGE}$ is thus established.


\subsection{Proof of Corollary~\ref{cor_1}}~\label{sec_proof_C1}

Suppose there exists a Turing-complete function $\mathcal{T}$, capable of performing precise node-level classification. Thanks to the theory of GNN Turing completeness [3], i.e., a GNN that has enough layers is Turing complete. This theorem can be formulated as: $\mathrm{A{\scriptstyle GG}}_{\infty}=\mathcal{T}$. Based on the conclusions derived from Theorem~\ref{thm_1}, we have that the sufficiently communicated function $\mathrm{L{\scriptstyle EARN}}_{\infty}$ is equivalent to $\mathrm{A{\scriptstyle GG}}_{\infty}$. The computational accessibility of M for embedding tasks can thus be demonstrated by the transitive equivalence chain $\mathcal{M}_i=\mathrm{L{\scriptstyle EARN}}_{\infty}=\mathrm{A{\scriptstyle GG}}_{\infty}=\mathcal{T}$.

\subsection{Proof of Theorem~\ref{thm_2}}~\label{sec_proof_T2}

First, we derive the characteristics of an effective one-hot degree matrix for a graph. According to the Erdos-Gallai theorem, if an $N \times \mathrm{deg}{\mathrm{max}}$-dimensional degree matrix $\hat{\mathbf{L}}$ satisfies the following condition: \begin{equation}\label{eq_EG}
\sum_{i=1}^{r} \underset{j}{\arg\max\hat{\mathbf{L}}{i,j}} \leq r(r-1) + \sum_{i=r+1}^{N} \min(r, \underset{j}{\arg\max\hat{\mathbf{L}}_{i,j}}),
\end{equation}
then a valid graph can be constructed based on the degrees represented by $\hat{\mathbf{L}}$. Here, we consider a more generalized scenario. Suppose there is a degree set without a topological structure, i.e., the corresponding one-hot degree matrix, denoted as $\mathbf{L}$ , is not subject to the constraint of Eq.~\eqref{eq_EG}. If there exists a matrix $\mathbf{W}$ capable of fitting any 0-1 matrix $\mathbf{L}$, i.e.,
\begin{equation}\label{eq_MSE_condition}
  \mathrm{MSE}(\sigma(\mathbf{Z}\mathbf{W}), \mathbf{L}) < \epsilon, \forall \epsilon>0,
\end{equation}
then it is considered that $\mathbf{W}$ can fit arbitrary degree matrices. As a corollary, since the feasible domain of $\hat{\mathbf{L}}$ is a subdomain of $\mathbf{L}$, the above theory can be proven. Subsequently, by introducing Eq.~\eqref{eq_EG} as a regularization term to guide the convergence direction of $\mathbf{W}$, $\hat{\mathbf{L}}$ can be obtained.

In order to prove the solvability of $\mathbf{W}$ in Eq.~\eqref{eq_MSE_condition}, intuitively, it can be solved by solving matrix equation.
However, since $\mathbf{Z}$ is typically row full rank [4], the rank of $\mathbf{Z}^{\top}\mathbf{Z}$ is not equal to $d_z$, which implies that the pseudoinverse of $\mathbf{Z}$ can only be approximately obtained, and thus the existence of a solution to the matrix equation cannot be proven directly. To address this, we use the Moore-Penrose (MP) pseudoinverse [5] to approximate the pseudoinverse of $\mathbf{Z}$. By conducting singular value decomposition on $\mathbf{Z}$, we obtain $\mathbf{Z} = \mathbf{U}\mathbf{D}\mathbf{V}^{\top}$, where $\mathbf{U}$ and $\mathbf{V}$ are invertible matrices and $\mathbf{D}$ is the singular value matrix. The MP pseudoinverse of $\mathbf{Z}$ is solvable and given by $\mathbf{Z}^{\dagger} = \mathbf{V} \mathbf{D}^+ \mathbf{U}^{\top}$.

According to Tikhonov Regularization [6], $\mathbf{Z}^{\dagger}=\lim_{\alpha \to 0} \mathbf{Z}^{\top} (\mathbf{Z}\mathbf{Z}^{\top} + \alpha \Upsilon)^{-1}$ where $\Upsilon$ the identity matrix. We can apply the Taylor series expansion method from matrix calculus. In this case, we expand $(\mathbf{Z} \mathbf{Z}^{\top}+ \alpha \Upsilon)^{-1}$ with respect to $\alpha$. This can be expressed as:
\begin{align}\label{eq_Taylor}
\left(\mathbf{Z}\mathbf{Z}^{\top} + \alpha \Upsilon\right)^{-1} &= \left(\mathbf{Z}\mathbf{Z}^{\top}\right)^{-1} - \alpha\left(\mathbf{Z}\mathbf{Z}^{\top}\right)^{-1}\Upsilon\left(\mathbf{Z}\mathbf{Z}^{\top}\right)^{-1} \notag \\
+ O(\alpha^2) &=\left(\mathbf{Z}\mathbf{Z}^{\top}\right)^{-1}+O(\alpha),
\end{align}
where $O(\cdot)$ denotes the higher-order terms.

Therefore, for any learning target $\mathbf{Z}\mathbf{W} = \mathbf{L}$, there exists a matrix $\mathbf{W} = \mathbf{V} \mathbf{D}^+ \mathbf{U}^{\top} \mathbf{L}$, such that
\begin{align}\label{eq_Taylor_2}
\mathbf{Z}\mathbf{W}&=\mathbf{Z} \lim_{\alpha \to 0}\left(\left(\mathbf{Z}^{\top}\left(\mathbf{Z}\mathbf{Z}^{\top}\right)^{-1}+O\left(\alpha\right)\right)\right)\mathbf{L} \notag \\
&= \lim_{\alpha \to 0}\left(\left(\mathbf{Z}\mathbf{Z}^{\top}\left(\mathbf{Z}\mathbf{Z}^{\top}\right)^{-1}\mathbf{L}+\mathbf{Z}O(\alpha)\mathbf{L}\right)\right) \notag \\
&=\lim_{\alpha \to 0}\left(\mathbf{L}+O(\alpha)\right),
\end{align}
and the output loss can thus be expressed as:
\begin{equation}\label{eq_Taylor_3}
\mathrm{MSE}(\sigma(\mathbf{Z}\mathbf{W}),\mathbf{L})=\mathrm{MSE}\left(\sigma \left(\lim_{\alpha \to 0}(\mathbf{L}+O(\alpha))\right),\mathbf{L}\right)<\epsilon.
\end{equation}
The solvability of $\mathbf{W}$ as presented in Eq.~\eqref{eq_MSE_condition} is supported by Equation Eq.~\eqref{eq_Taylor_3}, which demonstrates that $\mathbf{W}$ can be fitted to any given degree matrix.

\begin{figure}[htb]
\centering
\includegraphics[width=0.48\textwidth]{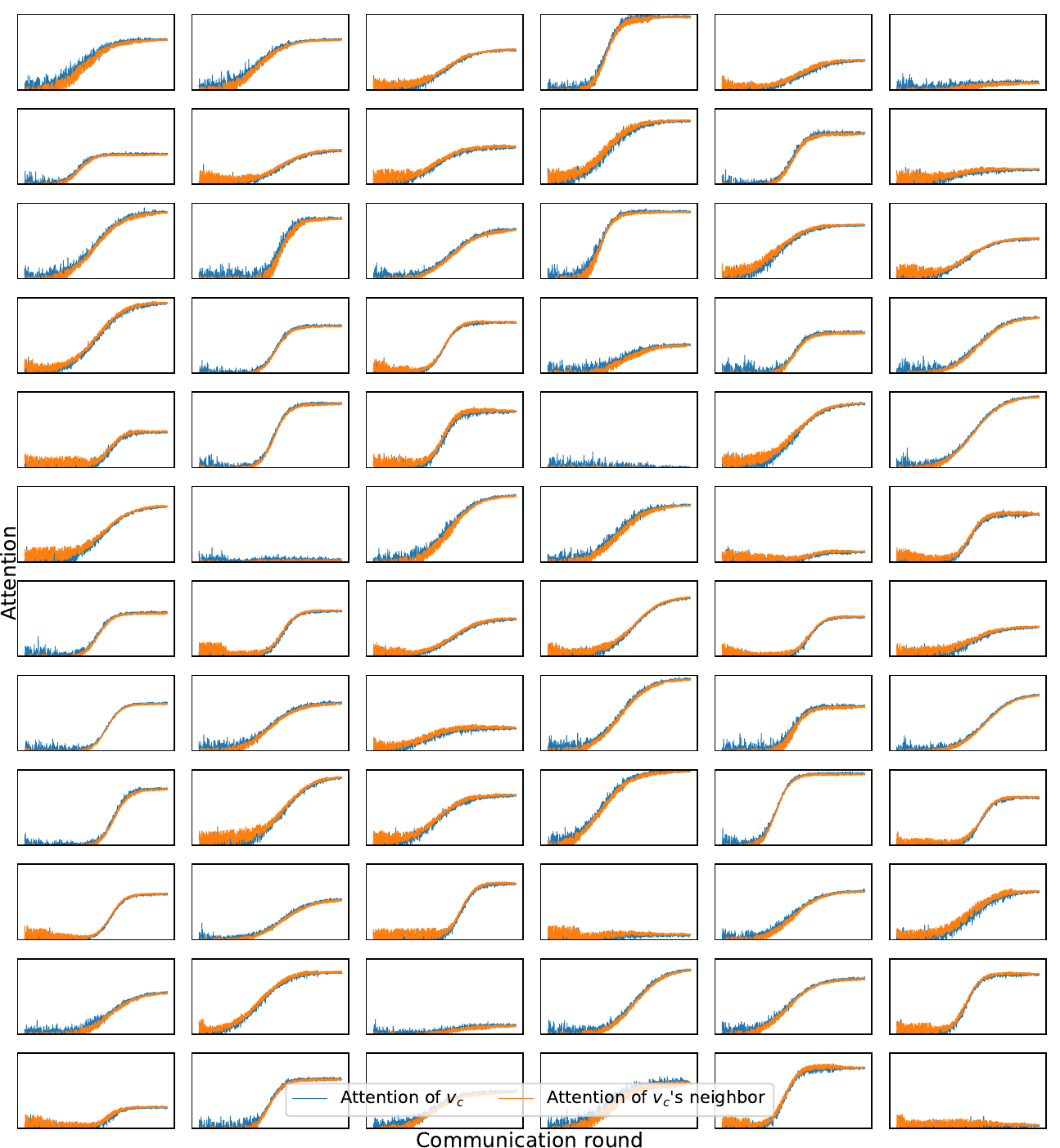}
\caption{Learning curves of the largest-degree node $v_c$ (blue lines) and its neighboring agents (orange lines) for attention on the same edge.}
\label{fig_curve_total}
\end{figure}

\begin{figure}[htb]
\centering
\includegraphics[width=0.49\textwidth]{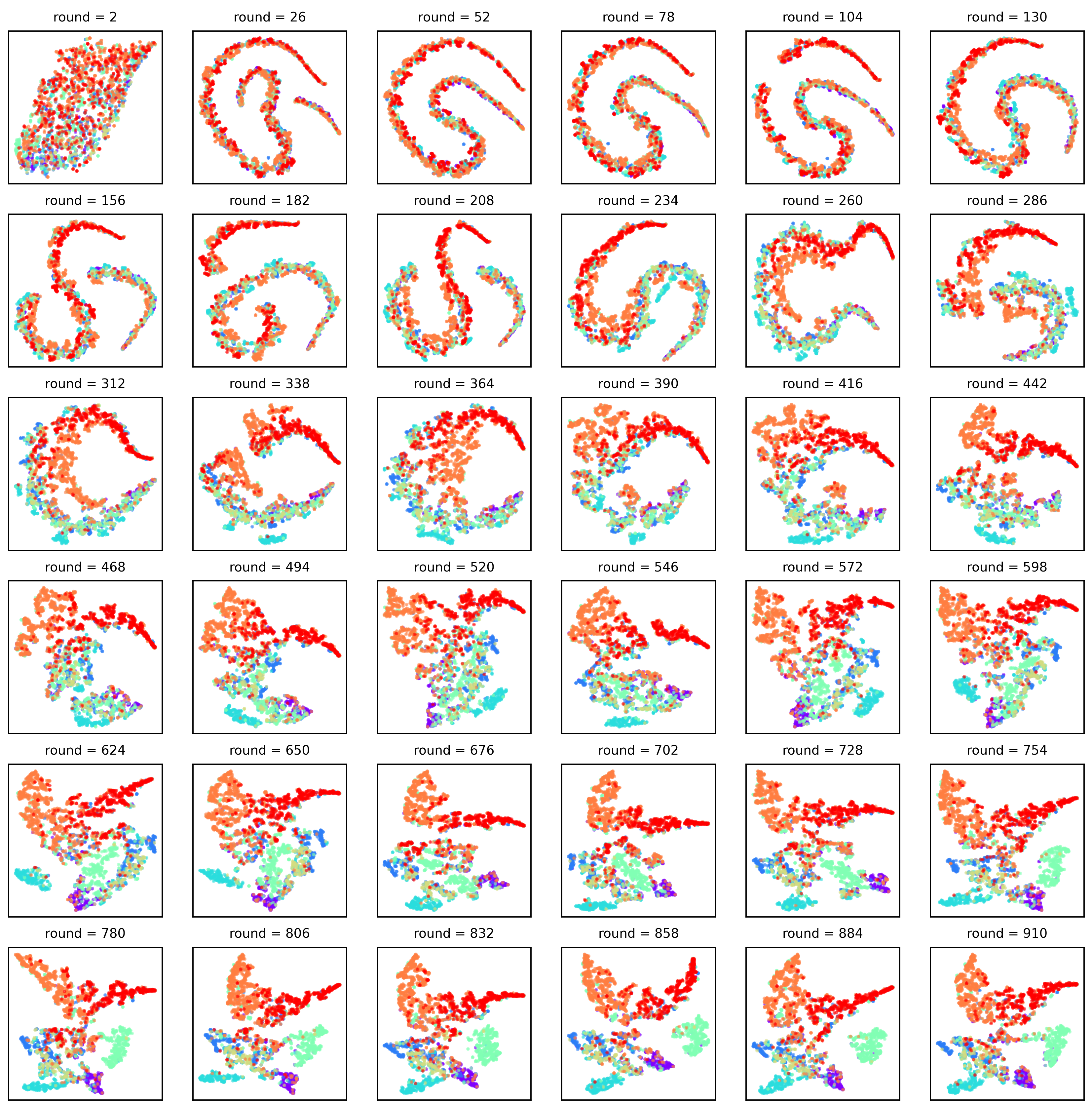}
\caption{The formation process of embedding in communication rounds.}
\label{fig_embed_total}
\end{figure}

\subsection{Proof of Theorem~\ref{thm_3}}~\label{sec_proof_T3}
First, we define two equivalent normed linear spaces, $A \sim \mathbb{R}^{d_z}$ and $B \sim \mathbb{R}^{d_z}$. The graph  $\mathcal{G}$ can be mapped into these spaces. Specifically, the node features are represented as points in spaces $A$ and $B$, while the edge distribution of $\mathcal{G}$  can be characterized as corresponding relationships between these two spaces. In this way, $\mathcal{G}$  is mapped into two spaces with corresponding relationships.

Next, we define diagonal matrices $\mathbf{D}_A=\mathrm{diag}(q_1,\ldots q_{d_z})$ and $\mathbf{D}_B=\mathrm{diag}(q_{d_z+1},\ldots q_{2d_{z}})$ for linear transformations, which, in the form of left multiplication of coordinates, scale the coordinates of vectors in spaces $A$ and $B$ across different dimensions to varying degrees. Since dimensions are mutually independent in the phenomenon space, solvable $\mathbf{D}_A$ and $\mathbf{D}_B$ exist to realize the mutual mapping of equivalent spaces.

Subsequently, we arbitrarily assign a value to $\phi$ and randomly divide it into two parts, such that $\phi$ = $\phi_1$ + $\phi_2$. We then define an equivalent space $C$ to $A$ and $B$, and sample two points with row-vector coordinates $\mathbf{c}_1$ and $\mathbf{c}_2$ in $C$ that satisfy $||\mathbf{c}_1||_1 = \phi_1 $ and $ ||\mathbf{c}_2||_1 = \phi_2$.

Now, based on the above conditions, we arbitrarily sample two points from $A$ and $B$, with row-vector coordinates $\mathbf{a}$ and $\mathbf{b}$, representing any two nodes in $\mathcal{G}$. Considering the randomness of edge distribution, $\phi$ can characterize the random neighbor relationship between them. We map $\mathbf{a}$ and $\mathbf{b}$ to space $C$ using $\mathbf{D}_A$ and $\mathbf{D}_B$, i.e., $\mathbf{c}_1=\mathbf{D}_A\mathbf{a}$, $\mathbf{c}_2=\mathbf{D}_B\mathbf{b}$. Thus, we obtain:
\begin{multline}\label{eq_Theorem3}
 \mathrm{Concat}_c(\mathbf{a},\mathbf{b}) [q_1,q_2,\ldots,q_{2d_z}]^{\top} = ||\mathbf{D}_A \mathbf{a} + \mathbf{D}_B \mathbf{b}||_1 \\
 =||\mathbf{D}_A \mathbf{a}||_1 + ||\mathbf{D}_B \mathbf{b}||_1=||\mathbf{c}_1||_1 + ||\mathbf{c}_2||_1 = \phi_1 + \phi_2 = \phi.
\end{multline}
As evidenced by Eq.\eqref{eq_Theorem3}, $\mathrm{Concat}_r(\mathbf{z}_i,\mathbf{z}_j)\mathbf{q}^{\top} = \phi$ holds.

\section{Additional Experiments}

\subsection{Symmetry of Attention in the Communication Process}\label{sec_APP_sym}
In Section~\ref{sec_sym}, we demonstrate the symmetry of well-trained attention on the same edge. Here, we further illustrate the fluctuation range of bidirectional attention on the same edge to provide additional evidence for effective communication between agents and to validate the efficacy of the neighbor feature aggregator. We select the node with the largest degree (denoted as $v_c$) in the Cora dataset as the central node and observe the forward and reverse attention of the edges formed with its 72 randomly selected neighbors. As shown in Figure~\ref{fig_curve_total}, it can be observed that the overall trends of forward and backward attention during the training process are similar, accompanied by tolerable fluctuations, and both converge to approximately the same values.

\subsection{Embedding in the Communication Process}\label{sec_emb_sym}
We provide a detailed account of the training process of node embeddings by GAgN on the Cora dataset. The experimental results are presented in Figure~\ref{fig_embed_total}. As can be seen, initially, the embeddings of all nodes are randomly dispersed in the label space. Through communication among agents, the nodes gradually reach a consensus on the embeddings, and by continuously adjusting, they gradually classify themselves correctly and achieve a steady state.

\section*{References}

\begin{itemize}\small
  \item[\lbrack 1\rbrack] Will Hamilton, Zhitao Ying, and Jure Leskovec. 2017. Inductive representation learning on large graphs. \emph{In NeurIPS}.
  \item[\lbrack 2\rbrack] Miguel Carreira-Perpinan and Weiran Wang. 2014. \emph{In AISTATS}.
  \item[\lbrack 3\rbrack] Andreas Loukas. 2019. What graph neural networks cannot learn: depth vs width. \emph{In ICLR}.
  \item[\lbrack 4\rbrack] Gama Fernando, Bruna Joan and Ribeiro Alejandro. 2020. Stability Properties of Graph Neural Networks. \emph{IEEE Trans. Signal Process.} (2020).
  \item[\lbrack 5\rbrack] Golub Gene H and Reinsch Christian. 1971. Singular value decomposition and least squares solutions. \emph{Linear algebra} (1971).
  \item[\lbrack 6\rbrack] Golub Gene H, Hansen Per Christian, O'Leary Dianne P. 1999. Tikhonov Regularization and Total Least Squares. \emph{SIAM J. Matrix Anal. Appl.} (1999)
\end{itemize}

\end{document}